\documentclass{article}

% if you need to pass options to natbib, use, e.g.:
%     \PassOptionsToPackage{numbers, compress}{natbib}
% before loading neurips_2023

% ready for submission
% \usepackage{neurips_2023}

% to compile a preprint version, e.g., for submission to arXiv, add add the
% [preprint] option:
    \usepackage[final]{neurips_2023}

% to compile a camera-ready version, add the [final] option, e.g.:
    % \usepackage[final]{neurips_2023}

% to avoid loading the natbib package, add option nonatbib:
%    \usepackage[nonatbib]{neurips_2023}

\usepackage[utf8]{inputenc} % allow utf-8 input
\usepackage[T1]{fontenc}    % use 8-bit T1 fonts
\usepackage{hyperref}       % hyperlinks
\usepackage{url}            % simple URL typesetting
\usepackage{colortbl, booktabs, array}       % professional-quality tables
\usepackage{amsfonts}       % blackboard math symbols
\usepackage{nicefrac}       % compact symbols for 1/2, etc.
\usepackage{microtype}      % microtypography
\usepackage{xcolor}         % colors
\usepackage{subcaption}
% For theorems and such
\usepackage{amsmath}
\usepackage{amssymb}
\usepackage{mathtools}
\usepackage{amsthm}
\usepackage{bm}
\usepackage{rotating}
\usepackage{algorithm, algorithmic}

% Boston
\newcommand{\bostonpricesece}{0.003 & \textbf{0.0029} & \textbf{0.0029} & \textbf{0.0029} & 0.0056 & 0.041}
\newcommand{\bostonpricesinterval}{0.76 & 1.4 & 1.4 & 1.4 & \textbf{0.73} & 7.4}
\newcommand{\bostonpricesnll}{0.21 & 0.39 & 0.4 & 0.42 & \textbf{-0.24} & 1.6}
\newcommand{\bostonpricesstd}{\textbf{0.16} & 0.31 & 0.3 & 0.33 & 0.22 & 1.9}

% Yacht
\newcommand{\yachtece}{0.0044 & \textbf{0.0043} & 0.0044 & \textbf{0.0043} & 0.0081 & 0.039}
\newcommand{\yachtinterval}{0.76 & 2.3 & 2.3 & 2.3 & \textbf{0.3} & 11}
\newcommand{\yachtnll}{0.26 & 0.68 & 0.69 & 0.68 & \textbf{-2} & 2}
\newcommand{\yachtstd}{0.16 & 0.5 & 0.47 & 0.5 & \textbf{0.14} & 2.8}

% Auto
\newcommand{\autompgece}{0.0036 & \textbf{0.0035} & \textbf{0.0035} & \textbf{0.0035} & 0.0053 & 0.044}
\newcommand{\autompginterval}{\textbf{0.6} & 1.7 & 1.8 & 1.7 & 0.96 & 11}
\newcommand{\autompgnll}{0.032 & 0.63 & 0.64 & 0.63 & \textbf{0.02} & 2}
\newcommand{\autompgstd}{\textbf{0.13} & 0.38 & 0.38 & 0.38 & 0.29 & 2.8}

% Wine
\newcommand{\wineece}{\textbf{0.00047} & \textbf{0.00047} & \textbf{0.00047} & \textbf{0.00047} & 0.0067 & 0.0058}
\newcommand{\wineinterval}{\textbf{2.1} & 3.8 & 3.8 & 3.9 & 2.8 & 5.4}
\newcommand{\winenll}{1.2 & 1.3 & 1.3 & 1.3 & \textbf{-0.36} & 1.4}
\newcommand{\winestd}{\textbf{0.54} & 1 & 1 & 0.88 & 0.72 & 1.4}

% Cement
\newcommand{\cementece}{{0.00071} & \textbf{0.00064} & \textbf{0.00064} & \textbf{0.00064} & 0.00076 & 0.032}
\newcommand{\cementinterval}{\textbf{0.93} & 2.5 & 2.5 & 2.5 & 2.1 & 11}
\newcommand{\cementnll}{\textbf{0.72} & 1.1 & 1.1 & 1.1 & 0.85 & 2}
\newcommand{\cementstd}{\textbf{0.25} & 0.64 & 0.64 & 0.64 & 0.57 & 2.8}

% Kin8nm
\newcommand{\kineightnmece}{\textbf{0.00016} & \textbf{0.00016} & \textbf{0.00016} & \textbf{0.00016} & 0.00053 & 0.028}
\newcommand{\kineightnminterval}{\textbf{0.26} & 0.47 & 0.47 & 0.48 & 0.44 & 1.6}
\newcommand{\kineightnmnll}{-0.54 & -0.65 & -0.65 & -0.63 & \textbf{-0.76} & 0.1}
\newcommand{\kineightnmstd}{\textbf{0.074} & 0.12 & 0.12 & 0.12 & 0.098 & 0.4}

% Facebook 2
\newcommand{\facebookece}{0.00044 & \textbf{0.00043} &\textbf{ 0.00043} & 0.00045 & 0.0089 & 0.044}
\newcommand{\facebookinterval}{\textbf{0.6} & 1.7 & 1.7 & 1.7 & 3.4 & 4.6}
\newcommand{\facebooknll}{3.6 & -1.3 & -1.3 & -1.2 & \textbf{-2.3} & 1.2}
\newcommand{\facebookstd}{\textbf{0.068} & 0.18 & 0.18 & 0.18 & 0.18 & 1.2}

\cellcolor{green!30}

\newcount\rowc

\makeatletter
\def\ttabular{%
\hbox\bgroup
\let\\\cr
\def\rulea{\ifnum\rowc=\@ne \hrule height 1.3pt \fi}
\def\ruleb{
\ifnum\rowc=1\hrule height 1.3pt \else
\ifnum\rowc=6\hrule height \heavyrulewidth 
   \else \hrule height \lightrulewidth\fi\fi}
\valign\bgroup
\global\rowc\@ne
\rulea
\hbox to 7em{\strut \hfill##\hfill}%
\ruleb
&&%
\global\advance\rowc\@ne
\hbox to 7em{\strut\hfill##\hfill}%
\cr}
\def\endttabular{%
\crcr\egroup\egroup}

% if you use cleveref..
\usepackage[capitalize,noabbrev]{cleveref}

%%%%%%%%%%%%%%%%%%%%%%%%%%%%%%%%
% THEOREMS
%%%%%%%%%%%%%%%%%%%%%%%%%%%%%%%%
\theoremstyle{plain}
\newtheorem{theorem}{Theorem}[section]

\newtheorem{lemma}[theorem]{Lemma}

\theoremstyle{definition}

\newtheorem{assumption}[theorem]{Assumption}
\theoremstyle{remark}
\newtheorem{remark}[theorem]{Remark}
\newcommand{\x}{\bm{x}}
\newcommand{\y}{\bm{y}}
\newcommand{\X}{\mathcal{X}}

\newcommand{\D}{\mathcal{D}}
\newcommand{\Dtr}{\D_{\text{tr}}}
\newcommand{\Dcal}{\D_{\text{cal}}}

\title{Sharp Calibrated Gaussian Processes}

% The \author macro works with any number of authors. There are two commands
% used to separate the names and addresses of multiple authors: \And and \AND.
%
% Using \And between authors leaves it to LaTeX to determine where to break the
% lines. Using \AND forces a line break at that point. So, if LaTeX puts 3 of 4
% authors names on the first line, and the last on the second line, try using
% \AND instead of \And before the third author name.

\author{%
  Alexandre Capone\\%\thanks{Use footnote for providing further information about author (webpage, alternative address)---\emph{not} for acknowledging funding agencies.} \\
  % TUM School of Computation, Information\\and Technology\\
  Technical University of Munich\\
  % Munich, PA 15213 \\
  \texttt{alexandre.capone@tum.de} \\
  % examples of more authors
  \And
  Sandra Hirche \\
  % TUM School of Computation, Information\\and Technology\\
  Technical University of Munich\\
  % Munich, PA 15213 \\
  \texttt{hirc@cit.tum.de} \\
  \AND
  Geoff Pleiss \\
  University of British Columbia \\
  Vector Institute\\
  \texttt{geoff.pleiss@stat.ubc.ca} \\
  % \And
  % Coauthor \\
  % Affiliation \\
  % Address \\
  % \texttt{email} \\
  % \And
  % Coauthor \\
  % Affiliation \\
  % Address \\
  % \texttt{email} \\
}

\begin{document}

\maketitle

\begin{abstract}
While Gaussian processes are a mainstay for various engineering and scientific applications, the uncertainty estimates don't satisfy frequentist guarantees and can be miscalibrated in practice. State-of-the-art approaches for designing calibrated models rely on inflating the Gaussian process posterior variance, which yields confidence intervals that are potentially too coarse. To remedy this, we present a calibration approach that generates predictive quantiles using a computation inspired by the vanilla Gaussian process posterior variance but using a different set of hyperparameters chosen to satisfy an empirical calibration constraint. This results in a calibration approach that is considerably more flexible than existing approaches, which we optimize to yield tight predictive quantiles. Our approach is shown to yield a calibrated model under reasonable assumptions. Furthermore, it outperforms existing approaches in sharpness when employed for calibrated regression.
\end{abstract}

\section{Introduction}
\label{section:introduction}

Gaussian process (GP) regression offers an ambitious proposition: by conditioning a model on measurement data, we are provided with a Gaussian probability distribution for the unseen data. Assuming that the posterior probability distribution holds, we can then directly calibrate our model using the inverse error function. Though the distribution of unseen data seldom follows the Gaussian prior distribution, and the GP generally does not adapt adequately to the observed distributions after being conditioned on the data, GPs have become one of the most powerful and established regression techniques. Besides having found widespread use in machine learning \citep{deisenroth2015gaussian,Srinivas2012}, their good generalization properties have motivated applications in the fields of control \citep{Kocijan2016}, astrophysics \citep{roberts2013gaussian} and chemistry \citep{deringer2021gaussian}, to name a few. Furthermore, the Bayesian paradigm offers a powerful tool to analyze the theoretical properties of different regression techniques \citep{Srinivas2012,capone2022gaussian}.

In this paper, we present a novel approach to obtaining sharp calibrated Gaussian processes, i.e., Gaussian processes that provide concentrated predictive distributions that accurately match the observed data. Instead of computing confidence intervals by inflating the Gaussian process posterior variance, our approach discards it and computes a new quantity inspired by the computation  of the posterior variance, where all hyperparameters are chosen in a way that results in both accurate and sharp calibration. In other words, we train two separate Gaussian processes: one for the predictive mean and one for obtaining predictive quantiles, which is exclusively used for calibration purposes. By doing so, we reach considerably more flexibility than existing calibration approaches, which enables us to additionally optimize the sharpness of the calibration. Our approach outperforms several state-of-the-art calibration approaches in terms of sharpness while still yielding similar calibration performance. Furthermore, it is competitive compared to a neural network-based method in sharpness without sacrificing calibration performance.

\paragraph{Notation.} We use $\mathbb{R}_+$ to denote the non-negative real numbers. Boldface lowercase/uppercase characters denote vectors/matrices. For two vectors $\bm{a}$ and $\bm{a}'$ in $\mathbb{R}^d$, we employ the notation $\bm{a}\leq \bm{a}'$ to denote componentwise inequality, i.e., ${a}_i\leq {a}'_i$, $i=1,\ldots,d$. For a square matrix $\bm{K}$, we use $\vert\bm{K}\vert$ to denote its determinant, and $[\bm{K}]_{ij}$ to denote the entry corresponding to the $i$-th row and $j$-th column.

\section{Related Work}
\label{section:relatedwork}

\paragraph{Calibration of Classification Models.}
There has been extensive work on obtaining calibrated models in the domain of classification. While there are many methods that do not employ post-processing, we only focus here on methods that employ some form of post-processing. Most forms of post-processing-based calibration for classification fall into the category of conformal methods \citep{vovk2005algorithmic}, which, given an input, aim to produce sets of labels that contain the true label with a pre-specified probability. Arguably the two most common forms of calibration are isotonic regression \citep{niculescu2005predicting} and Platt scaling \citep{platt1999probabilistic}. In \citet{niculescu2005predicting}, Platt scaling and isotonic regression are analyzed extensively for different types of predictive models. In \citet{guo2017calibration}, a modified form of Platt scaling for modern classification neural networks is proposed. 

\paragraph{Calibration of Regression Models.} Though initially developed for classification, conformal calibration has been extended to regression settings.
In \citet{Lakshminarayanan2017simple}, a calibration approach was proposed for deep ensembles. \citet{gal2017concrete} propose a dropout-based technique for calibrating deep neural networks. However, these approaches require changing the regressor, potentially deteriorating its predictive performance. It should be noted that Bayesian neural networks \citep{mackay1995bayesian}, while being able to provide credible sets for the output, fully trust the posterior, resulting in a naively calibrated model that seldom reflects the data's distribution. As a remedy for this, \citet{kuleshov2018accurate} present a recalibration approach that scales a model's predictive quantiles to satisfy the observed data's distribution. In \citet{vovk2020conformal}, a similar approach is presented, where interpolation between scaling factors is randomized, and a theoretical analysis is provided.
An extension of both \citet{kuleshov2018accurate} and \citet{vovk2020conformal} and other recalibration is proposed in \citet{marx2022modular}, along with corresponding theoretical guarantees. While these methods have been shown to yield well-calibrated models, the resulting predictive quantiles are potentially much too crude, resulting in predictions that perform poorly in terms of sharpness, i.e., the corresponding confidence intervals will overestimate the model error by a very large margin. To remedy this,  \citet{song2019distribution} and \citet{kuleshov2022calibrated} propose optimizing the parameters of a recalibration model by obtaining calibration on a distribution level. However, while \citet{song2019distribution} relies on complex approximations and provides no theoretical guarantees, \citet{kuleshov2022calibrated} does not allow to optimize for sharpness directly, and calibration is only guaranteed asymptotically as the number of data grows.

\section{Problem Statement}
\label{section:problemstatement}
Consider a compact input space $\mathcal{X}\subset \mathbb{R}^d$, and output space $\mathcal{Y}\subset \mathbb{R}$, and an unknown data distribution $\Pi$ on $\mathcal{X}\times \mathcal{Y}$. Consider a model conditioned on training data $\Dtr\sim \Pi$ that, for every $\bm{x} \in\mathcal{X}$ and confidence level $\delta \in [0,1]$, returns a base prediction $\mu_{\Dtr}(\bm{x})$ and an additive cut-point term $\beta_{\delta}\sigma_{\Dtr}(\delta,\bm{x})$, where $\sigma_{\Dtr}(\delta,\bm{x})\geq 0$ and $\beta_{\delta}\in \mathbb{R}$ is potentially negative, such that $\mu_{\Dtr}(\bm{x})+\beta_{\delta}\sigma_{\Dtr}(\delta,\bm{x})$ corresponds to a predictive $\delta$-quantile. The model is then said to be calibrated if
\begin{align}
    \mathbb{P}_{\bm{x},y \sim \Pi}\Big( y - \mu_{\Dtr}(\bm{x}) \leq \beta_{\delta}\sigma_{\mathcal{D}}(\delta,\bm{x})\Big) = \delta
\end{align}
holds for every $\delta \in [0,1]$. Furthermore, the calibrated model is also said to be sharp if the corresponding predictive distributions are concentrated \citep{gneiting2007probabilistic}, i.e., if the centered confidence intervals induced by the predictive quantiles
\begin{align} \vert \beta_{\delta}
\label{eq:centeredinterval}\sigma_{\Dtr}(\delta,\bm{x}) - \beta_{1-\delta}\sigma_{\Dtr}(1-\delta,\bm{x}) \vert \end{align}
are as small as possible for every $\delta \in [0,1]$. %This corresponds t \[\mathbb{E}_{\bm{x},y \sim \Pi}\left(\vert\beta_{\delta}\sigma_{\mathcal{D}}(\delta,\bm{x})\vert\right)\] is small for every $\delta$ \citep{kuleshov2018accurate}. 
Our goal is to find a sharply calibrated model based on GP regression.

\section{Gaussian Process Regression}
\label{section:gpregression}
In this section, we briefly review GP regression, with particular focus on the choice and influence of hyperparameters.

A GP is formally defined as a collection of random variables, any finite subset of which is jointly Gaussian \citep{Rasmussen2006}. It is fully specified by a prior mean function, which we set to zero without loss of generality, and a hyperparameter-dependent covariance function, called kernel $k:\bm{\varTheta} \times \X \times \X \rightarrow \mathbb{R}$, where $\bm{\varTheta}$ denotes the hyperparameter space. The core concept behind GP regression lies in assuming that any finite number of measurements of an unknown function $f:\X\rightarrow\mathbb{R}$ at arbitrary inputs $\bm{x}_1,\ldots,\bm{x}_N\in\X$ are jointly Gaussian with mean zero and covariance $\bm{K}(\bm{\theta})$, where 
\[
[\bm{K}(\bm{\theta})]_{ij} = k(\bm{\theta},\bm{x}_i, \bm{x}_j) 
\]
consists of kernel evaluations at the test inputs.

Given a set of noisy observations $\D = \{\x_i, y_i\}_{i=1}^N$, where $y_i \coloneqq f(\x_i) + \varepsilon_i$ and $\varepsilon_i\sim\mathcal{N}(0,\sigma_0^2)$ is iid Gaussian measurement noise, we can condition the GP on them to obtain the posterior distribution. The posterior distribution for a new observation $y^*$ at an arbitrary input $\bm{x}^*$ is again Gaussian distributed, with mean and variance
\begin{subequations}
    \begin{align}
        \mu_{\Dtr}(\bm{\theta},\bm{x}^*) =& \bm{k}(\bm{\theta})\left(\bm{K}(\bm{\theta}) + \sigma_0^2\bm{I} \right)^{-1}\bm{y},\\
        \begin{split}
        \sigma^2_{\Dtr}(\bm{\theta},\bm{x}^*)
        =&{k}(\bm{\theta},\bm{x}^*,\bm{x}^*) -  \bm{k}(\bm{\theta})\left(\bm{K}(\bm{\theta}) + \sigma_0^2\bm{I} \right)^{-1}\bm{k}(\bm{\theta}) + \sigma_0^2,
        \end{split}
    \end{align}
\end{subequations}
with $\bm{k}(\bm{\theta})\coloneqq \left({k}(\bm{\theta},\bm{x}^*,\bm{x}_1), \ldots, {k}(\bm{\theta},\bm{x}^*,\bm{x}_N)\right)$, $\bm{y}=\left(y_1,\ldots,y_N\right)$, and $\bm{I}$ denoting the identity matrix. In this paper, we restrict ourselves to kernels $k(\cdot,\cdot,\cdot)$ that yield a posterior variance that is monotonically increasing with respect to the hyperparameters $\bm{\theta}$, as specified in the following assumption.
\begin{assumption}
\label{assumption:montonicity}
    The posterior variance $\sigma_{\Dtr}^2(\bm{\theta},\bm{x}^*)$ is a continuous function of $\bm{\theta}$. Furthermore, for all hyperparameters $\bm{\theta},\bm{\theta}'\in\bm{\varTheta}$ with $\bm{\theta}\leq \bm{\theta}'$, it holds that $\sigma_{\Dtr}^2(\bm{\theta},\bm{x}^*)\leq \sigma_{\Dtr}^2(\bm{\theta}',\bm{x}^*)$.
\end{assumption}

\Cref{assumption:montonicity} holds trivially for the signal variance of a kernel. Moreover, it holds for any hyperparameters that lead to a monotonous increase in the Fourier transform, which is the case, e.g., for the inverse lengtschale of stationary kernels up to a multiplicative factor corresponding to the ratio of the lengthscales  \citep{capone2022gaussian}. Furthermore, several results that employ the so-called fill-distance indicate that \Cref{assumption:montonicity} holds for the inverse lengthscale of a broad class of stationary kernels as the lengthscale becomes very large or very small \citep{Wendland2004}. In our experiments, we observed that \Cref{assumption:montonicity} was never violated for the inverse lengthscale of the squared-exponential kernel. \Cref{assumption:montonicity} will be leveraged to define a cumulative density function by also changing the hyperparameters corresponding to the posterior covariance, as opposed to simply scaling it.
% (see, e.g., Chapter 11 in Scattered Data Approximation by Holger Wendland, 2011). 

%\Cref{assumption:montonicity} is not very restrictive, as it holds for many commonly used kernels, e.g., linear or isotropic kernels \citep{capone2022gaussian}. Furthermore, it can also be enforced  via a reparametrization of the kernel, e.g., if the hyperparameters correspond to lengthscales, then we choose $\bm{\theta}$ to be the inverse lengthscales. 

%Henceforth, we will also refer to the posterior mean $\mu_{\Dtr}(\bm{\theta},\bm{x}^*)$ as the model.

Arguably one of the most challenging aspects of GP regression lies in the choice of hyperparameters $\bm{\theta}$, as they ultimately determine various characteristics of the posterior, e.g., smoothness and amplitude. In practice, the most common way of choosing $\bm{\theta}$ is by maximizing the log marginal likelihood
\begin{align}
	\label{eq:loglikelihood}
	\begin{split}
		&\log p(\bm{y} \vert \bm{X}, \bm{\theta}) = -\frac{1}{2}\log\vert \bm{K}(\bm{\theta}) + \sigma_0^2 \bm{I} \vert -\frac{N}{2}\log(2\pi) - \frac{1}{2}\y^\intercal \left(\bm{K}(\bm{\theta}) +\sigma_0^2 \bm{I}\right)^{-1}\y .
	\end{split}
\end{align}
%where $\vert\cdot\vert$ denotes the determinant operator. 

\begin{figure*}
\centering
\includegraphics[width =0.7\columnwidth]{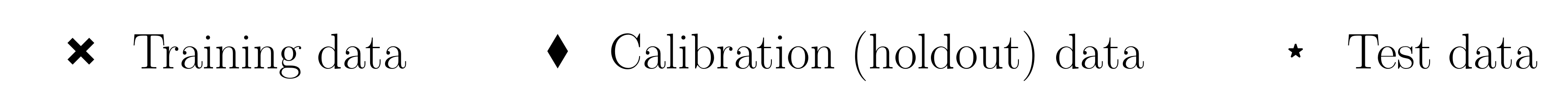}
\begin{subfigure}[b]{0.47\textwidth}
\includegraphics[height = 0.56\columnwidth, trim={1.5cm 1cm 0cm 0},clip]{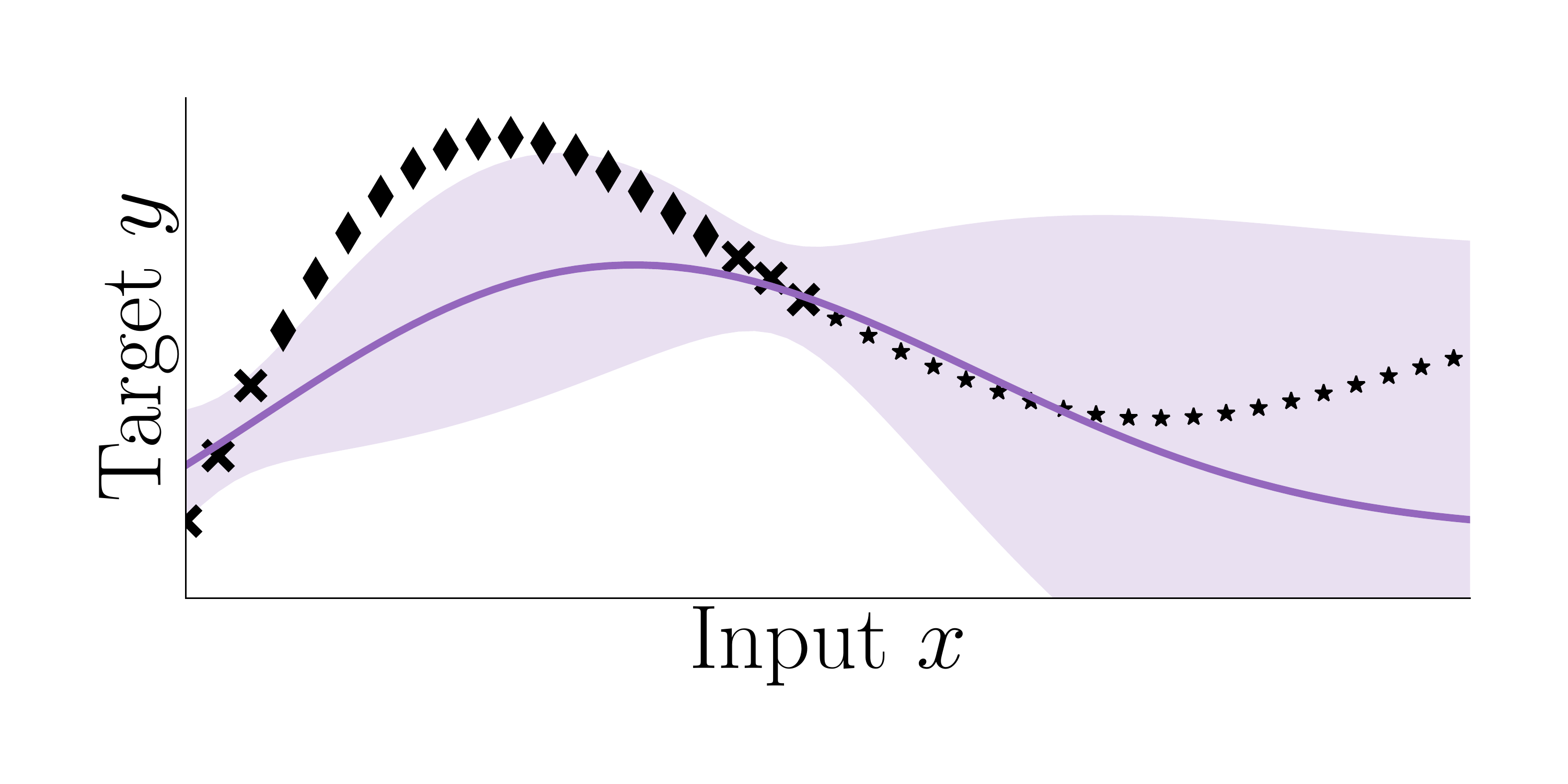}
\caption{Bayesian approach.} 
\end{subfigure}
\hfill
\begin{subfigure}[b]{0.47\textwidth}
\includegraphics[height = 0.56\columnwidth, trim={3.5cm 1cm 0 0},clip]{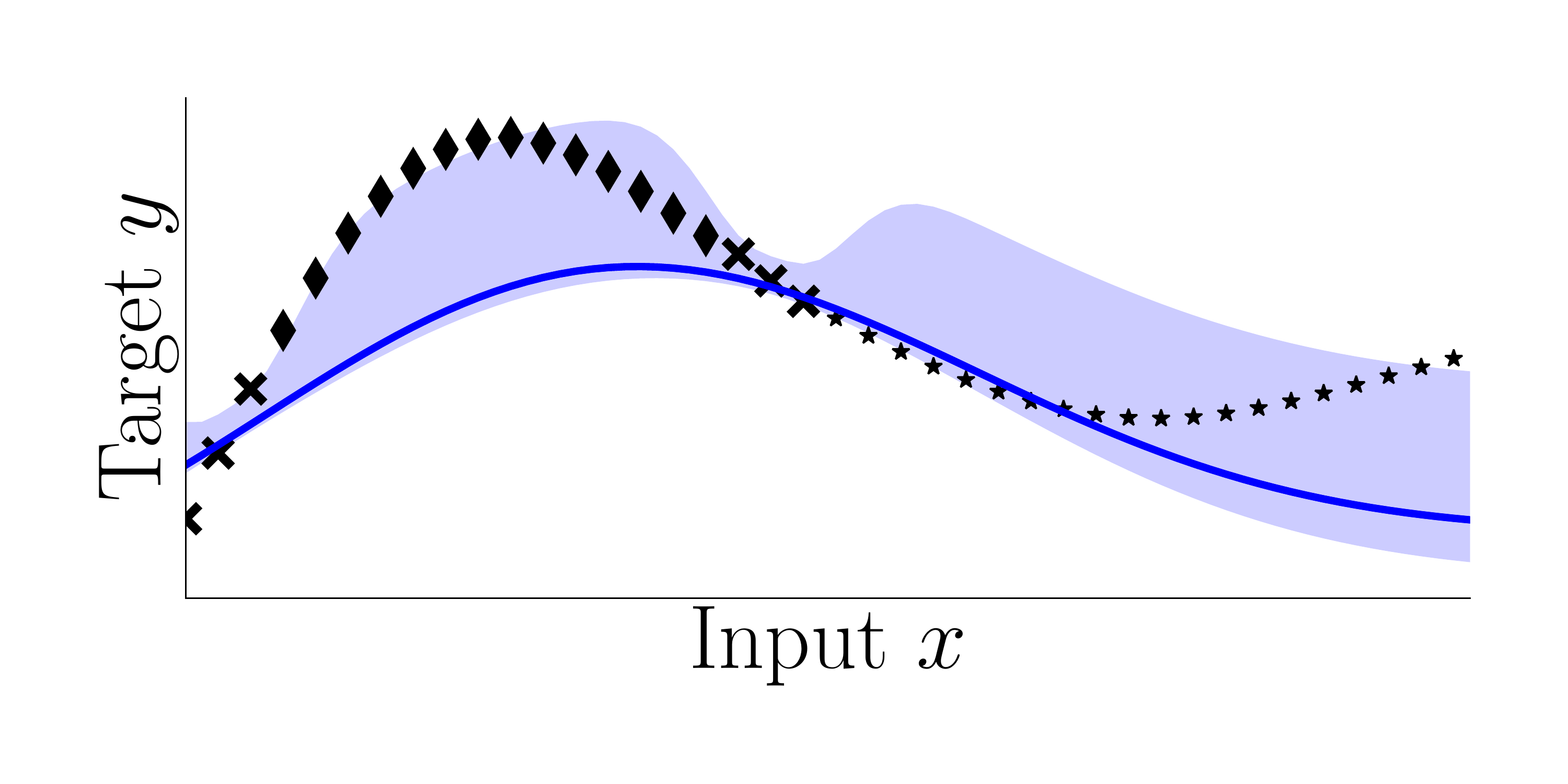}
\caption{Our approach.}
\end{subfigure}
\caption{Confidence interval of $95\%$ (shaded regions) obtained with purely Bayesian approach (a), where the inverse error function is employed to compute $\beta_{\delta}$, and our approach (b). Solid lines represent the predictive mean. The confidence interval obtained with the Bayesian approach is not only grossly overconfident (contains less than $80\%$ of the total data) but also partially extremely loose, exhibiting unnecessarily large confidence intervals far away from the data. By contrast, our approach is both accurate and tight.}
\label{fig:poorfit}
\end{figure*}

% \begin{figure}
% \centering
% \includegraphics[width =\columnwidth]{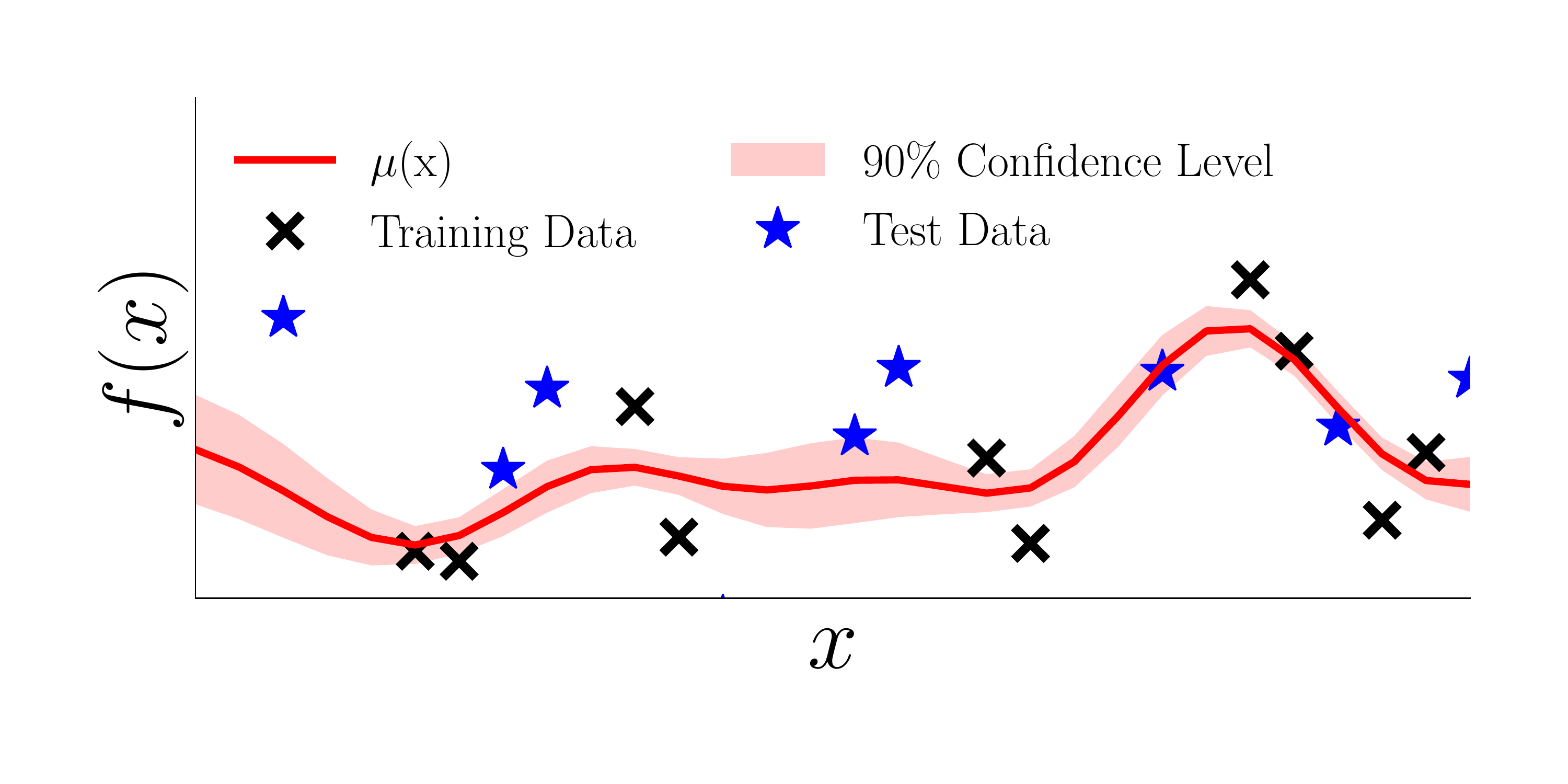}
% \caption{Confidence interval of $90\%$ obtained with purely Bayesian approach, where the inverse error function is employed to compute $\beta_{\delta}$. The confidence region does not capture the training nor the test data with $90\%$ accuracy, indicating that it is exceedingly confident.}
% \label{fig:poorfit}
% \end{figure}

In terms of \textit{posterior mean} quality, i.e., predictive performance of $\mu_{\Dtr}(\bm{\theta},\bm{x}^*)$, choosing the hyperparameters in this manner is often the most promising option, since it seeks a trade-off between model complexity and data fit, and has repeatedly been shown to yield a satisfactory mean square error when applied to test data \citep{Rasmussen2006}. However, even when employing log-likelihood maximization, the posterior GP distribution is seldom well calibrated, and, in practice, the data often has a significantly different distribution. As a result, quantiles obtained with a purely Bayesian approach are either too confident or too conservative in practice \citep{capone2022gaussian,fong2020marginal}. Furthermore, the restrictions imposed by the GP prior often produce a posterior variance $\sigma^2(\bm{\theta},\bm{x}^*)$ that is grossly conservative. See \Cref{fig:poorfit} for an illustration.

% generally cannot be said of the posterior variance $\sigma^2(\bm{\theta},\bm{x}^*)$, particularly whenever it is employed to obtain predictive quantiles, as it is either too confident or too conservative in practice \citep{capone2022gaussian,fong2020marginal}. This lack of accuracy stems from the minimization of the log marginal likelihood \eqref{eq:loglikelihood}, which measures the GP's ability to explain the existing data, but not its ability to make predictions on unseen data \citep{lotfi2022bayesian}. This is because, on the one hand, the log marginal likelihood \eqref{eq:loglikelihood} does not account for calibration, i.e., how well the posterior variance explains the model error. On the other hand, in practice the Gaussian prior for the data often deviates much too strongly from its true distribution, and the posterior, being also Gaussian, typically also presents a poor fit. This discrepancy between the posterior and the unseen data has serious implications for whenever we aim to obtain a calibrated model, i.e., a model that provides accurate confidence intervals for the unseen data. This is particularly true of the Bayesian approach to confidence intervals, where we obtain confidence intervals by simply multiplying the posterior standard deviation with the corresponding z-score. See \Cref{fig:poorfit} for an illustration.

\section{Proposed Approach}
\label{section:proposedapproach}

In this section, we present our approach to obtaining calibrated GPs by discarding the posterior variance and obtaining alternative predictive quantiles using a quantity inspired by the posterior variance and new hyperparmeters. As we will demonstrate, our approach has numerous advantages. First, by varying the hyperparameters used to obtain predictive quantiles, the resulting quantiles are sharper than what can be obtained simply by multiplying the posterior variance with an appropriate constant. Secondly, by exploiting the monotonicity of the hyperparameters, our approach can be used to obtain intervals for multiple confidence levels $\delta$ in a very efficient manner. Finally, our method is backed by tight theoretical guarantees, obtained by exploiting its connection to conformal prediction.

\subsection{Sharp Calibrated GP for Single Confidence Level $\delta$}
\label{subsection:singledelta}

In the following, we describe how to obtain a sharply calibrated GP for a fixed desired calibration level $\delta$. %In this paper, we quantify sharpness using the measure suggested by \citet{kuleshov2018accurate}, whereby a model is said be sharp if the average posterior variance of the calibrated model is low{\color{red} This is inaccurate. It might be best to establish some form of equivalence between sharpness metrics first.}. 
Instead of scaling the GP posterior variance to meet the desired calibration level $\delta$, we propose computing predictive quantiles by training a new quantity similar to the posterior variance but with new hyperparameters. In other words, we discard the posterior variance of the first GP and replace it by a quantity that corresponds to the posterior variance of a different GP, which we train separately. This way, we allow for more degrees of freedom during calibration. We then leverage this additional freedom to minimize the distance of the quantiles to the predictive mean, which yields a sharply calibrated model.

We assume to have a data set $\D$, which we split into training data $\Dtr$ and calibration (holdout) data $\Dcal$, with $\D = \Dtr \cup \Dcal$. The training data $\Dtr$ will be used to compute the posterior, whereas $\Dcal$ will be used to calibrate the model. Note that while not splitting the data might be reasonable when using other types of regression models, e.g., Bayesian or ensemble neural networks, splitting the data in the case of GPs is beneficial for providing accurate quantiles for data out of distribution. This is because, for many commonly used kernels, the GP posterior distribution is considerably more concentrated for test points close to the training data $\Dtr$ than for those far away from $\Dtr$. Since we wish to obtain a model that is calibrated for data both close and far away from the training data, we take this into account during training by splitting the data. We also assume to have a \textit{predictive} mean $\mu_{\Dtr}(\bm{\theta}^{R},\cdot)$, corresponding to GP posterior mean function, where the regressor hyperparameters $\bm{\theta}^{R}$ were obtained, e.g., via log-likelihood maximization, as discussed in \Cref{section:gpregression}. However, note that any other way of choosing the posterior mean hyperparameters $\bm{\theta}^{R}$ is permitted. %, e.g., expert knowledge, cross-validation, or by minimizing metrics that take generalization into account \citep{lotfi2022bayesian}. 

Given $\mu_{\Dtr}(\bm{\theta}^{R},\cdot)$ and $\D = \Dtr \cup \Dcal$, we then follow the convention of other recalibration approaches \citep{kuleshov2018accurate,marx2022modular} and aim to obtain, for an arbitrary $0\leq\delta\leq1$, a scalar $\beta_\delta$ and vector of hyperparameters $\bm{\theta}_{\delta}$, such that the corresponding predictive quantile $\mu_{\Dtr}(\bm{\theta}^{R},\cdot)+\beta_\delta\sigma_{\Dtr}(\bm{\theta}_{\delta},\cdot)$ contains $\delta$ times the total amount of data points. For this reason, we henceforth refer to $\bm{\theta}_{\delta}$ as \textit{calibration hyperparameters}.

% A naive way of choosing $\beta_{\delta}$ and $\bm{\theta}_{\delta}$ would be by guaranteeing that 
% \[\Big\{\bm{x} \times y \in \mathcal{X} \times \mathcal{Y} \ \vert \ y - \mu_{\Dtr}(\bm{\theta}^{R},\bm{x}) \leq \beta_{\delta}\sigma_{\Dtr}(\bm{\theta}_{\delta},\bm{x}) \Big\}\] contains exactly $\delta$ times the desired number of data from $\D$. While a similar approach might be reasonable, e.g., for Bayesian or ensemble neural networks, it should be avoided in the case of GPs. This is because $\sigma_{\Dtr}(\bm{\theta}_{\delta},\cdot)$ is approximately proportional to the measurement noise at $\D$, whereas it behaves considerably differently for test points that are not close to the data $\D$. Since we wish to obtain a model that is calibrated within the whole input space $\X$, we have to take this into account during training. To this end, we randomly split the total available data $\D$ into posterior variance training data $\Dtr$ and calibration (holdout) data $\Dcal$, with $\D = \Dtr \cup \Dcal$. The posterior mean $\mu_{\Dtr}(\bm{\theta}^R,\cdot)$ and the predictive quantile $\beta_\delta\sigma_{\Dtr}(\bm{\theta}_{\delta},\cdot)$ are then conditioned uniquely on $\Dtr$, and $\Dcal$ is employed to choose sharp calibration hyperparameters $\bm{\theta}_{\delta}$. 
In order to obtain a sharply calibrated GP model, ideally we would like to choose $\beta_\delta$ and $\bm{\theta}_\delta$ such that they minimize the expected length of the centered intervals \eqref{eq:centeredinterval} subject to calibration. However, this optimization problem is hard to solve. Hence, we instead attempt to improve model sharpness by concentrating the predictive distribution around the predictive mean $\mu_{\Dtr}(\bm{\theta}^{R},\cdot)$, i.e., by minimizing the deviation of the quantiles from $\mu_{\Dtr}(\bm{\theta}^{R},\cdot)$. This corresponds to solving the optimization problem
\begin{align}
\label{eq:desiredequationcal}
    \begin{split}
        \min_{\substack{\beta_{\delta} \in\mathbb{R}\,\\  \bm{\theta}_\delta \in \bm{\varTheta}}} \ &  \sum^{N_{\text{cal}}}_{i=1}   \beta_{\delta}^2 \sigma_{\Dtr}^2\left(\bm{\theta}_{\delta}, \bm{x}_{\text{cal}}^i\right) \\
        \text{s.t.} \ &\sum^{N_{\text{cal}}}_{i=1} \frac{\mathbb{I}_{\geq 0}\left( \Delta y_{\text{cal}}^i  -  \beta_{\delta}\sigma_{\Dtr}\left(\bm{\theta}_{\delta}, \bm{x}_{\text{cal}}^i\right)\right)}{N_{\text{cal}}+1} = \delta
    \end{split}
\end{align}
where $\left\{\bm{x}_{\text{cal}}^i, y_{\text{cal}}^i\right\}\in \Dcal $ are samples from the calibration data set, $\Delta y_{\text{cal}}^i\coloneqq y_{\text{cal}}^i - \mu_{\Dtr}(\bm{\theta}^R,\bm{x}_{\text{cal}}^i)$ corresponds to the difference between predicted and measured output, and $N_{\text{cal}} = \vert \Dcal\vert $ to the number of data points used for calibration.
\begin{remark}
    Note that we employ $N_{\text{cal}}+1$ in the denominator in \eqref{eq:desiredequationcal} instead of $N_{\text{cal}}$. Though this choice makes little difference in practice, we require it for theoretical guarantees.
\end{remark}

The equality constraint in \eqref{eq:desiredequationcal} is generally infeasible, e.g., if  $N_{\text{cal}}=2$ and $\delta = 0.5$, and is discontinuous, making it hard to solve. As it turns out, this can be easily remedied without any detriment to sharpness or calibration, and the problem can be rendered considerably easier to solve by substituting the equality constraint in \eqref{eq:desiredequationcal} with
\begin{align}
\label{eq:betaasqlin}
    \beta_{\delta} = q_{\text{lin}}(\delta,  \bm{\Sigma}_{\Dtr}^{-1}\bm{\Delta y_{\text{cal}}}),
\end{align}
where $q_{\text{lin}}(\delta,  \bm{\Sigma}_{\Dtr}^{-1}\bm{\Delta y_{\text{cal}}})$ is a monotonically increasing piecewise linear function\footnote{ For $\bm{a} \in \mathbb{R}^{N_{\text{cal}}}$, a permutation $i_1,\ldots, i_{N_{\text{cal}}} \in [1,..., N_{\text{cal}}]$ where $a_{i_j} \leq a_{i_{j+1}}$ for all $j$, and $\delta \in \left[{l}/(N_{\text{cal}}+1),  (l+1)/(N_{\text{cal}}+1) \right]$,
\[
q_{\text{lin}}(\delta, \bm{a}) = 
a_{i_l} + \left(\delta (N+1) - l\right)\left(a_{i_{l+1}}-a_{i_l}\right).  \]} that maps $\delta = j/(N_{\text{cal}}+1)$ to the $j$-th smallest entry of $\bm{\Sigma}_{\Dtr}^{-1}\bm{\Delta y_{\text{cal}}}$, and the entries of the vector
\[\bm{\Sigma}_{\Dtr}^{-1}\bm{\Delta y_{\text{cal}}} = \left( \frac{\Delta y_{\text{cal}}^1}{\sigma_{\Dtr}\left(\bm{\theta}_{\delta}, \bm{x}_{\text{cal}}^1\right)},..., \frac{\Delta y_{\text{cal}}^{N_{\text{cal}}}}{\sigma_{\Dtr}(\bm{\theta}_{\delta}, \bm{x}_{\text{cal}}^{N_{\text{cal}}})}\right)^{\top}\]
correspond to the z-scores of the data under the calibration standard deviation $\sigma_{\Dtr}\left(\bm{\theta}_{\delta}, \cdot\right)$. The original problem \eqref{eq:desiredequationcal} then becomes
\begin{align}
\label{eq:surrogateminimization}
    \begin{split}
        \min_{\bm{\theta}\in \bm{\varTheta}} \ &  \sum^{N_{\text{cal}}}_{i=1}   \left[q_{\text{lin}}(\delta,  \bm{\Sigma}_{\Dtr}^{-1}\bm{\Delta y_{\text{cal}}}) \sigma_{\Dtr}\left(\bm{\theta}_{\delta}, \bm{x}_{\text{cal}}^i\right) \right]^2 ,
    \end{split}
\end{align}
which is considerably easier to solve due to the lack of constraints, and enables us to use gradient-based approaches, since $q_{\text{lin}}(\delta,  \bm{\Sigma}_{\Dtr}^{-1}\bm{\Delta y_{\text{cal}}})$ is differentiable with respect to $\sigma_{\Dtr}(\bm{\theta}_{\delta}, \bm{x}_{\text{cal}}^i)$.

\begin{remark}
The choice of interpolant \eqref{eq:betaasqlin} is due to its simplicity. However, other forms of monotone interpolation are also possible. For high data sizes, the choice of interpolant becomes of little relevance, since we only perform small interpolation steps. 
\end{remark}

\subsection{Calibrated GP for Arbitrary Confidence Level $\delta$}
\label{subsection:arbitrarydelta}

While \eqref{eq:surrogateminimization} is useful for obtaining a sharply calibrated model for a single confidence level $\delta$, solving \eqref{eq:surrogateminimization} multiple times whenever we want to obtain sharply calibrated models for different confidence levels $\delta$ can be time-consuming. Furthermore, interpolating between any two arbitrary solutions of \eqref{eq:surrogateminimization} won't necessarily yield a result close to the desired calibration.
Fortunately, we can leverage \Cref{assumption:montonicity} to show that interpolating between two solutions of \eqref{eq:surrogateminimization} will yield a result close to the desired calibration, provided that we interpolate between two strictly increasing or decreasing sets of hyperparameters. Formally, this is achieved by solving \eqref{eq:surrogateminimization} $N_{\text{cal}}+1$ times to obtain $\beta_0,\beta_{\delta_1}, \ldots, \beta_{\delta_{N}}\in \mathbb{R}$ and $\bm{\theta}_0,\bm{\theta}_{\delta_1}, \ldots, \bm{\theta}_{\delta_{N}}\in \bm{\varTheta}$, subject to two additional constraints. First, the calibration scaling parameters $\beta_{\delta}$ must be monotonically increasing with $\delta$, i.e.,
\[\beta_{\delta_i} \leq \beta_{\delta_j} \quad \text{if} \ {\delta_i} < {\delta_j},\] 
and the calibration hyperparameters $\bm{\theta}_{\delta}$ must be decreasing with $\delta$ if $\beta_{\delta}$ is negative, and increasing if $\beta_{\delta}$ is positive, i.e., 
\[\bm{\theta
}_{\delta_i} \leq \bm{\theta
}_{\delta_j}, \quad \text{if} \ {\delta_i} < {\delta_j} \ \text{and} \ \ \beta_{\delta_i}\geq 0,\]
\[\bm{\theta
}_{\delta_i} \geq \bm{\theta
}_{\delta_j}, \quad \text{if} \ {\delta_i} < {\delta_j} \ \text{and} \ \ \beta_{\delta_j}\leq 0.\]
In other words, the entries of $\bm{\theta}_{\delta}$ are strictly decreasing with $\beta_{\delta}$ up until the sign of $\beta_{\delta}$ switches, after which they are increasing. The reason why we impose these restrictions is that we can then confidently interpolate between any values of $\beta_{\delta}$ and $\bm{\theta}_{\delta}$, since \Cref{assumption:montonicity} implies that the quantile stipulated by $\beta_{\delta}\sigma_{\Dtr}(\bm{\theta},\cdot)$ is monotonically increasing with $\delta$. We then train simple piecewise linear interpolation models $\hat{\beta}:[0,1] \rightarrow \mathbb{R}$ and $\hat{\bm{\theta}}:[0,1] \rightarrow \bm{\varTheta}$, such that $\hat{\beta}(\delta_i) = \beta_{\delta_i}$ and $\hat{\bm{\theta}}(\delta_i) = \bm{\theta}_{\delta_i}$, with the additional constraint that $\hat{\bm{\theta}}(\delta)$  reaches a minimum whenever $\hat{\beta}(\delta)=0$, which can be potentially achieved by adding an artificial vector of training hyperparameters $\bm{\theta}_{\delta}$ for computing $\hat{\bm{\theta}}(\delta)$. Note, however, that any other form of monotone interpolation is also acceptable for obtaining $\hat{\beta}(\delta)$ and $\hat{\bm{\theta}}(\delta)$. The procedure is summarized in \Cref{alg:calibratedgp}.

\begin{remark}
 While lengthscale constraints of the form $\bm{\theta}_{\delta_{j}}\leq\bm{\theta}_{\delta_{j+1}}$ can be easily enforced when solving \eqref{eq:surrogateminimization} by substituting $\bm{\theta}$ with $\bm{\theta}_{\delta_{j}} + \Delta \bm{\theta}$ and minimizing over the logarithm of $\Delta \bm{\theta}$, the constraint $\beta_{\delta_{j}}\leq\beta_{\delta_{j+1}}$ is not enforced in \eqref{eq:surrogateminimization}. However, in practice we were often able to find local minima of \eqref{eq:surrogateminimization} that satisfy this requirement.
\end{remark}

We can then easily show that our approach achieves an arbitrary calibration level as the amount of data grows, provided that we choose the confidence levels $\delta$ accordingly.

\begin{algorithm}[tb]
   \caption{Training Calibration Hyperparameters for Arbitrary Confidence Level}
\label{alg:calibratedgp}
\begin{algorithmic}
   \STATE {\bfseries Input:} kernel $k(\cdot,\cdot,\cdot)$, predictor $\mu_{\Dtr}(\bm{\theta}^R,\cdot)$, calibration data $\Dcal$, confidence levels $\delta_1,...\delta_N$
   \FOR{$i=1$ {\bfseries to} $M$}
   \STATE Compute $\beta_{\delta_1},\bm{\theta}_{\delta_1}, ...,\beta_{\delta_{N_{\text{cal}}}},\bm{\theta}_{\delta_{N_{\text{cal}}}}$ by solving \eqref{eq:betaasqlin} and \eqref{eq:surrogateminimization} subject to 
   \[\bm{\theta
}_{\delta_i} \leq \bm{\theta
}_{\delta_j}, \quad \text{if} \ {\delta_i} < {\delta_j} \ \text{and} \ \ \beta_{\delta_i}\geq 0,\]
\[\bm{\theta
}_{\delta_i} \geq \bm{\theta
}_{\delta_j}, \quad \text{if} \ {\delta_i} < {\delta_j} \ \text{and} \ \ \beta_{\delta_j}\leq 0.\]
   \ENDFOR
   \STATE Fit a continuous, monotonically increasing interpolation model $\hat{\beta}(\delta)$ and a continuous model $\hat{\bm{\theta}}(\delta)$ using the training data $\left\{\delta_i,  \beta_{\delta_i}, \bm{\theta}_{\delta_i}\right\}_{i=1,...,N}$
   \STATE{\bfseries Output:} $\hat{\beta}(\delta)$, $\hat{\bm{\theta}}(\delta)$ 
\end{algorithmic}
\end{algorithm}

\begin{theorem}
\label{theorem:maintheorem}
Let $y$ be absolutely continuous conditioned on $\bm{x}$, let $\mu_{\Dtr}(\bm{\theta}^R,\cdot)$ be a posterior GP mean, and let $\sigma_{\Dtr}(\cdot,\cdot)$ be a GP posterior variance conditioned on $\Dtr$. Then, for any calibration data set
$\Dcal = \left\{\bm{x}_{\text{cal}}^i, y_{\text{cal}}^i\right\} $, choose 
\[\delta_1 = \frac{1}{N_{\text{cal}}+1}, \quad \delta_2 = \frac{2}{N_{\text{cal}}+1}, \  \ldots, \quad \delta_{N_{\text{cal}}}=\frac{N_{\text{cal}}}{N_{\text{cal}}+1},\]
and let $\hat{\beta}(\cdot)$ and $\hat{\bm{\theta}}(\cdot)$ be interpolation models obtained with \Cref{alg:calibratedgp} and confidence levels $\delta_1,...,\delta_{N_{\text{cal}}}$.
% $\beta_{\delta_1}, \ldots, \beta_{\delta_{N_\text{cal}}}\in \mathbb{R}$ be $N_{\text{cal}}$ scalars and $\bm{\theta}_{\delta_1}, \ldots, \bm{\theta}_{\delta_{N_{\text{cal}}}}\in \bm{\varTheta}$ be $N_{\text{cal}}$ vectors of calibration hyperparameters, such that $\beta_{\delta_1}\leq\ldots\beta_{\delta_{N_{\text{cal}}}}$ and 
% \begin{align*}
% \bm{\theta}_{\delta_{i-1}}\geq \bm{\theta}_{\delta_i} &\quad  \text{if} \ \beta_{\delta_{i}}\leq 0 \\
% \bm{\theta}_{\delta_{i}}\leq \bm{\theta}_{\delta_{i+1}} &\quad  \text{if} \ \beta_{\delta_{i}}\geq 0.
% \end{align*} 
% and, for some permutation of of the calibration data, it holds that
% \begin{align*}
% &y_{\text{cal}}^{i_{j+1}}- \mu_{\Dtr}(\bm{\theta}^R,\bm{x}_{\text{cal}}^{i_{j+1}}) - \beta_{\delta_{j+1}}\sigma_{\Dtr}(\bm{\theta}_{{\delta}_{j+1}}, \bm{x}_{\text{cal}}^{i_{j+1}}) \\ < & y_{\text{cal}}^{i_{j}}- \mu_{\Dtr}(\bm{\theta}^R,\bm{x}_{\text{cal}}^{i_j}) - \beta_{\delta_j}\sigma_{\Dtr}(\bm{\theta}_{{\delta}_j}, \bm{x}_{\text{cal}}^{i_{j}})
% \end{align*}
% for all $j=1,...,N_{\text{cal}}$. 
Then
\begin{align*}
    \mathbb{P}_{\bm{x},y \sim \Pi}\Big( y - \mu_{\Dtr}(\hat{\bm{\theta}}(\delta),\bm{x}) \leq \hat{\beta}(\delta)\sigma_{\Dtr}(\hat{\bm{\theta}}(\delta),\bm{x})\Big) \in \left[\delta - \frac{1}{N_{\text{cal}}+1}, \delta + \frac{1}{N_{\text{cal}}+1}\right].
\end{align*}
\end{theorem}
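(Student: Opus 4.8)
The plan is to condition throughout on the training set $\Dtr$, so that the mean function $\mu_{\Dtr}(\bm{\theta}^R,\cdot)$ and the whole family $\{\sigma_{\Dtr}(\bm{\theta},\cdot)\}_{\bm{\theta}\in\bm{\varTheta}}$ become fixed deterministic maps, leaving as randomness only the i.i.d. draw of $\Dcal$ together with the fresh test pair $(\bm{x},y)\sim\Pi$. (I read the mean in the claimed event as the regressor mean $\mu_{\Dtr}(\bm{\theta}^R,\bm{x})$, the quantity that actually enters $\Delta y_{\text{cal}}^i$; it does not depend on the calibration hyperparameters.) Writing $r(\bm{x},y)\coloneqq y-\mu_{\Dtr}(\bm{\theta}^R,\bm{x})$ and the z-score $s_{\bm{\theta}}(\bm{x},y)\coloneqq r(\bm{x},y)/\sigma_{\Dtr}(\bm{\theta},\bm{x})$, the event in the theorem is equivalent (since $\sigma_{\Dtr}>0$) to $s_{\hat{\bm{\theta}}(\delta)}(\bm{x},y)\le\hat{\beta}(\delta)$, i.e.\ to a threshold on a single scalar score.

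First I would reduce an arbitrary level $\delta$ to the grid. Fix $\bm{x}$ and consider $\delta\mapsto\hat{\beta}(\delta)\,\sigma_{\Dtr}(\hat{\bm{\theta}}(\delta),\bm{x})$. I claim it is nondecreasing: where $\hat{\beta}\ge 0$, both $\hat{\beta}$ and, by the increasing-$\bm{\theta}$ constraint of \Cref{alg:calibratedgp} together with \Cref{assumption:montonicity}, $\sigma_{\Dtr}(\hat{\bm{\theta}}(\delta),\bm{x})$ increase; where $\hat{\beta}\le 0$ the $\bm{\theta}$-constraint reverses, so $|\hat{\beta}|$ and $\sigma_{\Dtr}$ both decrease and the signed product again increases; continuity across the crossover follows from forcing $\hat{\bm{\theta}}$ to its minimum exactly where $\hat{\beta}=0$. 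Hence for $\delta\in[\delta_l,\delta_{l+1}]$ the band value is sandwiched between its two grid values, so the coverage probability is sandwiched between the coverages at $\delta_l$ and $\delta_{l+1}$. Since the grid spacing is $\delta_{l+1}-\delta_l=1/(N_{\text{cal}}+1)$, once the grid coverages are shown to equal $\delta_l$ and $\delta_{l+1}$ exactly, every $\delta$ in the cell inherits coverage within $1/(N_{\text{cal}}+1)$ of $\delta$, which is the assertion.

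It then remains to compute the coverage at a grid point $\delta_j=j/(N_{\text{cal}}+1)$. By \eqref{eq:betaasqlin}, $\hat{\beta}(\delta_j)=\beta_{\delta_j}$ is exactly the $j$-th smallest calibration z-score $s_{\bm{\theta}_{\delta_j}}(\bm{x}_{\text{cal}}^i,y_{\text{cal}}^i)$, so the target event is $\{s_{\bm{\theta}_{\delta_j}}(\bm{x},y)\le V_{(j)}\}$, where $V_{(j)}$ is the $j$-th order statistic of the $N_{\text{cal}}$ calibration scores computed under the same $\bm{\theta}_{\delta_j}$. If the $N_{\text{cal}}+1$ scores (calibration plus test) were exchangeable, then absolute continuity of $y\mid\bm{x}$ would rule out ties almost surely and the rank of the test score would be uniform on $\{1,\dots,N_{\text{cal}}+1\}$, giving $\mathbb{P}(s_{\bm{\theta}_{\delta_j}}(\bm{x},y)\le V_{(j)})=j/(N_{\text{cal}}+1)=\delta_j$ exactly; this is the conformal-prediction heart of the argument and would close the proof.

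The main obstacle is precisely this exchangeability, because $\bm{\theta}_{\delta_j}$ is itself learned from $\Dcal$, so the score function is correlated with the calibration points but not with the test point, breaking a naive split-conformal argument. The saving observation I would exploit is that the surrogate objective \eqref{eq:surrogateminimization} is a symmetric function of the calibration sample --- both $q_{\text{lin}}(\delta_j,\cdot)$ and $\sum_i\sigma_{\Dtr}^2(\bm{\theta},\bm{x}_{\text{cal}}^i)$ are permutation-invariant --- so its minimizer $\bm{\theta}_{\delta_j}$, and hence the calibration scores among themselves, are exchangeable; the genuine difficulty is extending this symmetry to the fresh test point. I would handle this in the standard full-conformal manner: define the score of a candidate $(\bm{x},y)$ through the hyperparameter re-optimized on the augmented set $\Dcal\cup\{(\bm{x},y)\}$, which makes the construction symmetric in all $N_{\text{cal}}+1$ points and restores exchangeability, and then argue that \Cref{alg:calibratedgp} realizes (or conservatively brackets) this band. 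This is the step I expect to require the most care, since it is where the learned, data-dependent nature of $\bm{\theta}_{\delta_j}$ must be reconciled with a distribution-free coverage guarantee.
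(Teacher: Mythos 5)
Your scaffolding is sound and matches the paper's: you correctly read the mean in the event as $\mu_{\Dtr}(\bm{\theta}^R,\cdot)$ (the $\hat{\bm{\theta}}(\delta)$ appearing inside the mean in the theorem statement is effectively a typo; the paper's proof uses $\bm{\theta}^R$ throughout), and your sandwich argument — monotonicity of $\delta\mapsto\hat{\beta}(\delta)\sigma_{\Dtr}(\hat{\bm{\theta}}(\delta),\bm{x})$ via \Cref{assumption:montonicity} and the sign-dependent hyperparameter constraints, then interpolating coverage between adjacent grid levels spaced $1/(N_{\text{cal}}+1)$ apart — is exactly the role played by \Cref{lemma:monotonicequation} in the paper. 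The genuine gap is the step you yourself flag: coverage at the grid points. Your proposed repair, a full-conformal augmentation in which $\bm{\theta}_{\delta_j}$ is re-optimized on $\Dcal\cup\{(\bm{x},y)\}$, establishes exchangeability for a \emph{different, transductive procedure} than the one the theorem is about: \Cref{alg:calibratedgp} trains $(\beta_{\delta_j},\bm{\theta}_{\delta_j})$ on $\Dcal$ alone and is evaluated on a fresh test point with no re-training. The bridging claim that the algorithm ``realizes (or conservatively brackets)'' the augmented band is precisely what you would need to prove, and it is not obviously true — re-solving \eqref{eq:surrogateminimization} on the augmented set moves the minimizer, and hence the induced ordering of scores, in an uncontrolled way, so neither containment between the two bands holds in general. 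As written, the proof is incomplete at its crux.

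The paper closes this step by a different mechanism that never tries to make the learned score exchangeable. It fixes \emph{arbitrary} calibration-data-independent continuous functions $(\tilde{\beta},\tilde{\bm{\theta}})$ satisfying \eqref{eq:conditions1}--\eqref{eq:conditions2} and defines the score $\varphi(\bm{x},y)$ as the unique root of the implicit equation
\begin{align*}
y - \mu_{\Dtr}(\bm{\theta}^R,\bm{x}) - \tilde{\beta}(\varphi(\bm{x},y))\,\sigma\bigl(\tilde{\bm{\theta}}(\varphi(\bm{x},y)),\bm{x}\bigr) = 0,
\end{align*}
with well-posedness and strict monotonicity in $y$ supplied by \Cref{lemma:monotonicequation}; absolute continuity of $y\mid\bm{x}$ rules out ties, just as in your argument. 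It then invokes \Cref{lemma:marx} (Theorem 1 of \citet{marx2022modular}), in which \emph{all} dependence on $\Dcal$ is carried by the monotone recalibrator $H$ — which that lemma explicitly allows to be fit on the calibration data. The constraint \eqref{eq:betaasqlin} forces the trained model to satisfy $\hat{\varphi}(\bm{x}_{\text{cal}}^{i_j},y_{\text{cal}}^{i_j}) = j/(N_{\text{cal}}+1)$, i.e., the implied recalibrator is the identity on the grid, and the event $\{\hat{\varphi}(\bm{x},y)\leq\delta\}$ is rewritten as the quantile event via monotonicity. In short: where you attempt to symmetrize the learned score à la full conformal, the paper re-packages the learning into the $H$ slot of an existing split-conformal lemma. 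Your instinct that the data dependence of $\hat{\bm{\theta}}$ is the delicate point is sound — the paper's own transfer from arbitrary fixed $(\tilde{\beta},\tilde{\bm{\theta}})$ to the trained $(\hat{\beta},\hat{\bm{\theta}})$ is asserted rather than argued in detail — but if you want a completable route, it is the implicit-equation-plus-recalibrator construction, not augmented re-optimization, that does the work here.
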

\begin{proof}
The proof can be found in the supplementary material.
\end{proof}

Note that \Cref{theorem:maintheorem} also implies that a single set of calibration parameters $\beta_{\delta}$ and $\bm{\theta}_{\delta}$ obtained by solving \eqref{eq:surrogateminimization}, since we can substitute $\hat{\beta}(\delta) =\beta_{\delta}$
and $\hat{\bm{\theta}}(\delta) =\bm{\theta}_{\delta}$ into \eqref{eq:calibrationmetric}.

\section{Discussion}
\label{section:discussion}

\paragraph{Computational Complexity.} Much like hyperparameter optimization for standard GPs \citep{Rasmussen2006}, the major driver behind the computational complexity in our approach stems from the need to invert the covariance matrix, an operation that scales cubically with the amount of data. In order to alleviate the computational cost of our approach, we can resort to different tools that improve scalability \citep{liu2020gaussian}. One approach consists of employing only a subset of the training data $\D_{\text{tr}}$ to choose the calibration hyperparameters $\bm{\theta}_{\delta}$, and then the full data set to choose $\beta_\delta$. While this potentially leads to a loss in sharpness compared to when using the full data set, it still guarantees a calibrated model. Our technique is also readily applicable to sparse GPs \citep{snelson2005sparse,titsias2009variational}, as \Cref{assumption:montonicity} typically still holds. This option is also explored in numerical experiments, in \Cref{section:experiments}. Moreover, in many settings a specific level of calibration is often required, as opposed to several different ones, e.g., in stochastic model predictive control, where chance constraints corresponding to a fixed risk have to be satisfied \citep{mesbah2016stochastic}. In such settings, we potentially only have to train a single vector of calibration hyperparameters $\bm{\theta}_{\delta}$, which reduces computational cost. 

% \paragraph{Choosing Different Metrics for Training.} Although we choose to minimize the discrepancy between predicted and desired confidence level in \eqref{eq:surrogateminimization}, we typically have multiple degrees of freedom, i.e., hyperparameters, which can be employed to, e.g., minimize other metrics while guaranteeing that \eqref{eq:surrogateminimization} does not deviate too strongly from its minimum. A possibility in this case would be to minimize sharpness, which can be quantified in terms of the sum of calibration variances \citep{kuleshov2018accurate}, while applying constraints to guarantee that the value of \eqref{eq:surrogateminimization} is kept low. Alternatively, a problem with soft-constraints can be employed to achieve a similar result. In our experiments in \Cref{section:experiments}, when solving \eqref{eq:surrogateminimization}, we initialized the hyperparameters with those corresponding to the log-likelihood maximum $\bm{\theta}^{R}$. This way, the initial guess corresponds to a low-complexity model, i.e., one that is sharp, which in turn potentially leads to a solution of \eqref{eq:surrogateminimization} that is also comparatively sharp.

\paragraph{Initialization and Solution.}
The choice of initial hyperparameters can affect
the optimization results considerably, and choosing a good hyperparameter initialization can be challenging, as is true when choosing the hyperparameters for the predictive mean. While this can be partially addressed by employing random restarts, we can also reuse trained models for similar calibration levels, since it is reasonable to expect that only small changes to the calibration hyperparameters are required to achieve a slight increase or decrease in confidence level. Furthermore, we can also simplify the problem by considering only a scaled version of the regression hyperparameters $\bm{\theta}^R$ to compute $\bm{\theta}_{\delta}$, which would reduce the optimization problem to a line search. %Then, we perform multiple optimizations with a successively increasing slope for the sigmoid function, and use the solution from the prior optimization problem as a new guess. This way, the sigmoid functions slowly approximate the step function and the initial hyperparameters will be close enough to the solution for the slope not to be zero.

\begin{figure*}
\centering
\begin{subfigure}[b]{0.32\textwidth}
\includegraphics[width = 1.1\columnwidth, trim={1.9cm 2cm 0cm 0},clip]{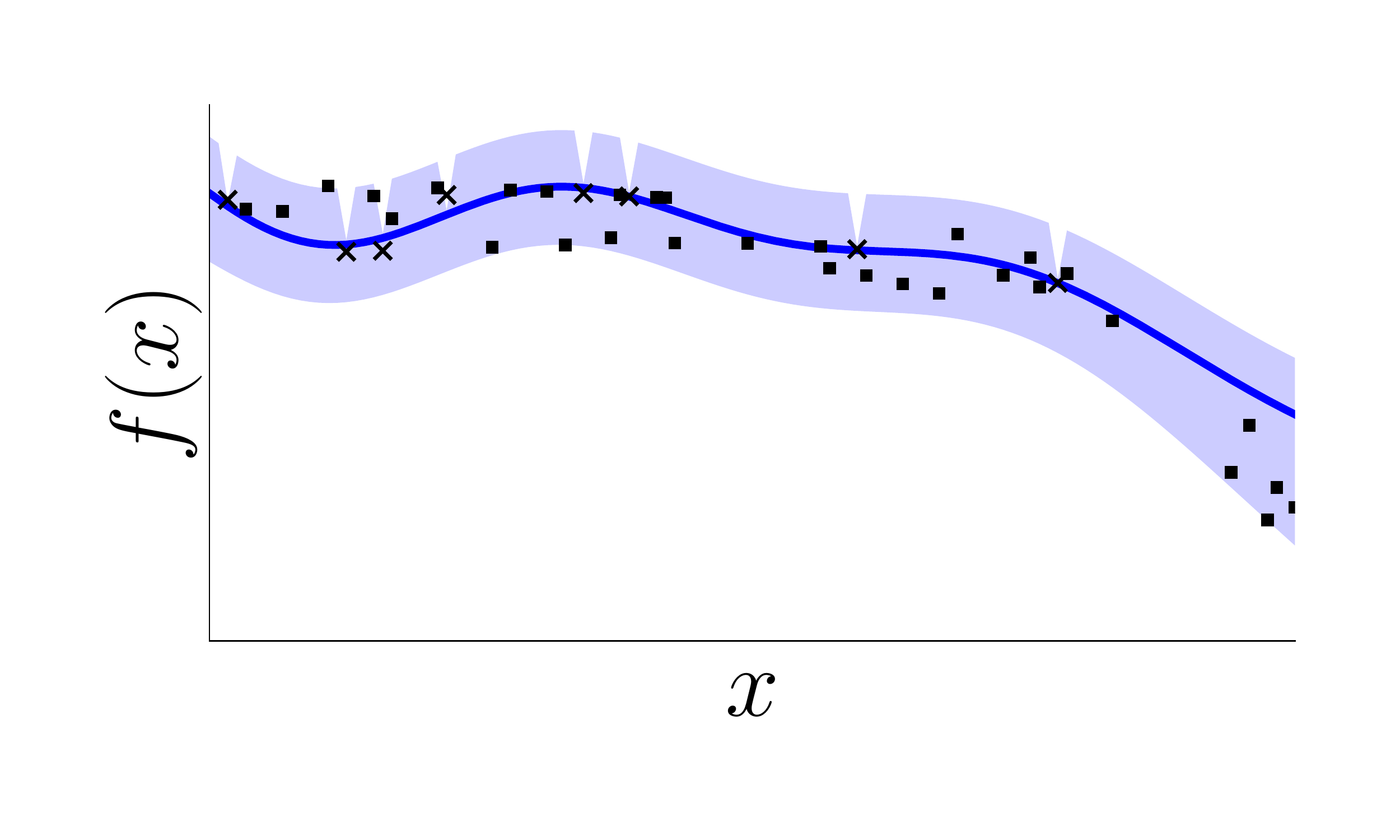}
\caption{Our approach.} 
\label{fig:toy_problem_ours}
\end{subfigure}
\hfill
\begin{subfigure}[b]{0.32\textwidth}
\includegraphics[width = 1.1\columnwidth, trim={1.9cm 2cm 0 0},clip]{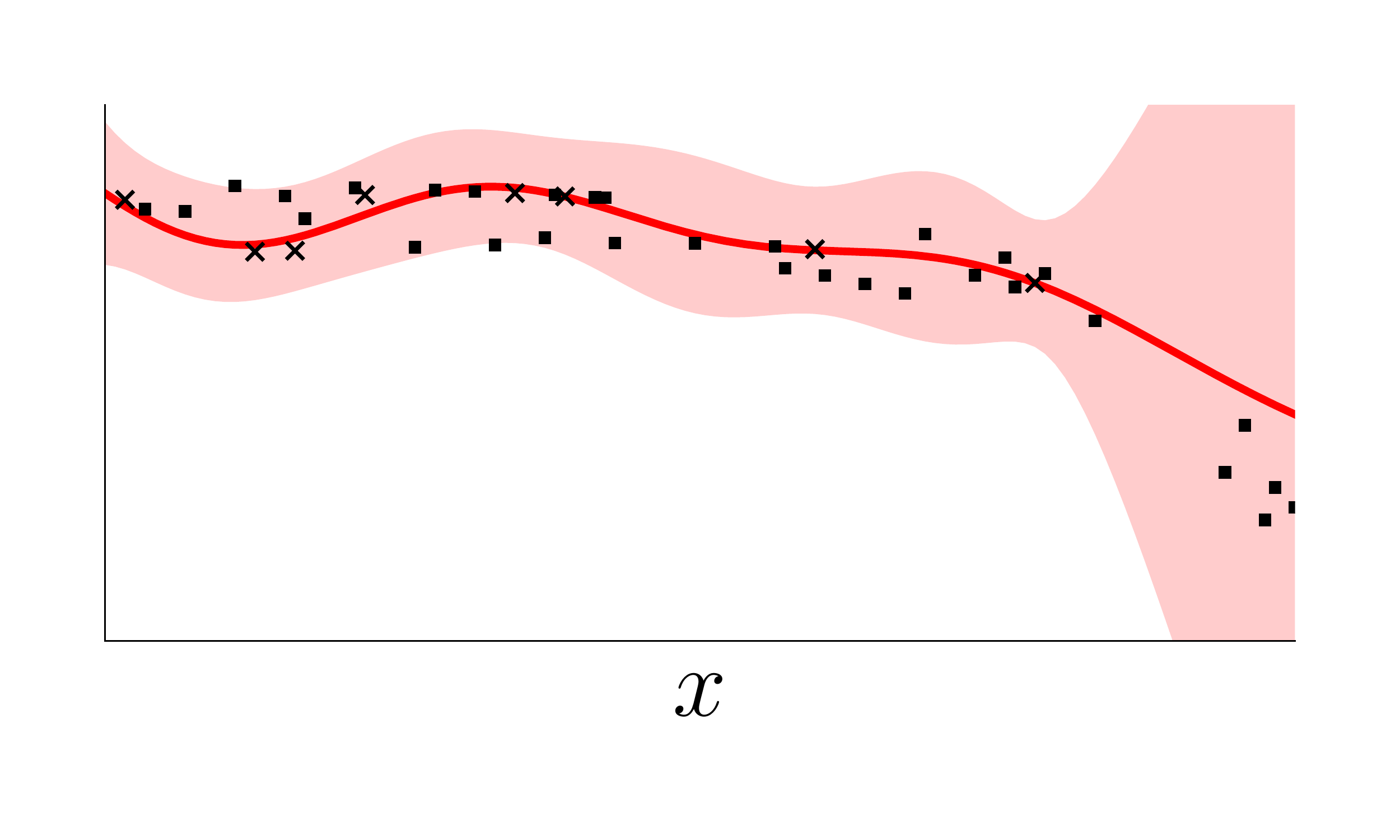}
\caption{\citet{kuleshov2018accurate}.}
\label{fig:toy_prob_kuleshov}
\end{subfigure}
\hfill
\begin{subfigure}[b]{0.32\textwidth}
\includegraphics[width = 1.1\columnwidth, trim={1.9cm 2cm 0cm 0},clip]{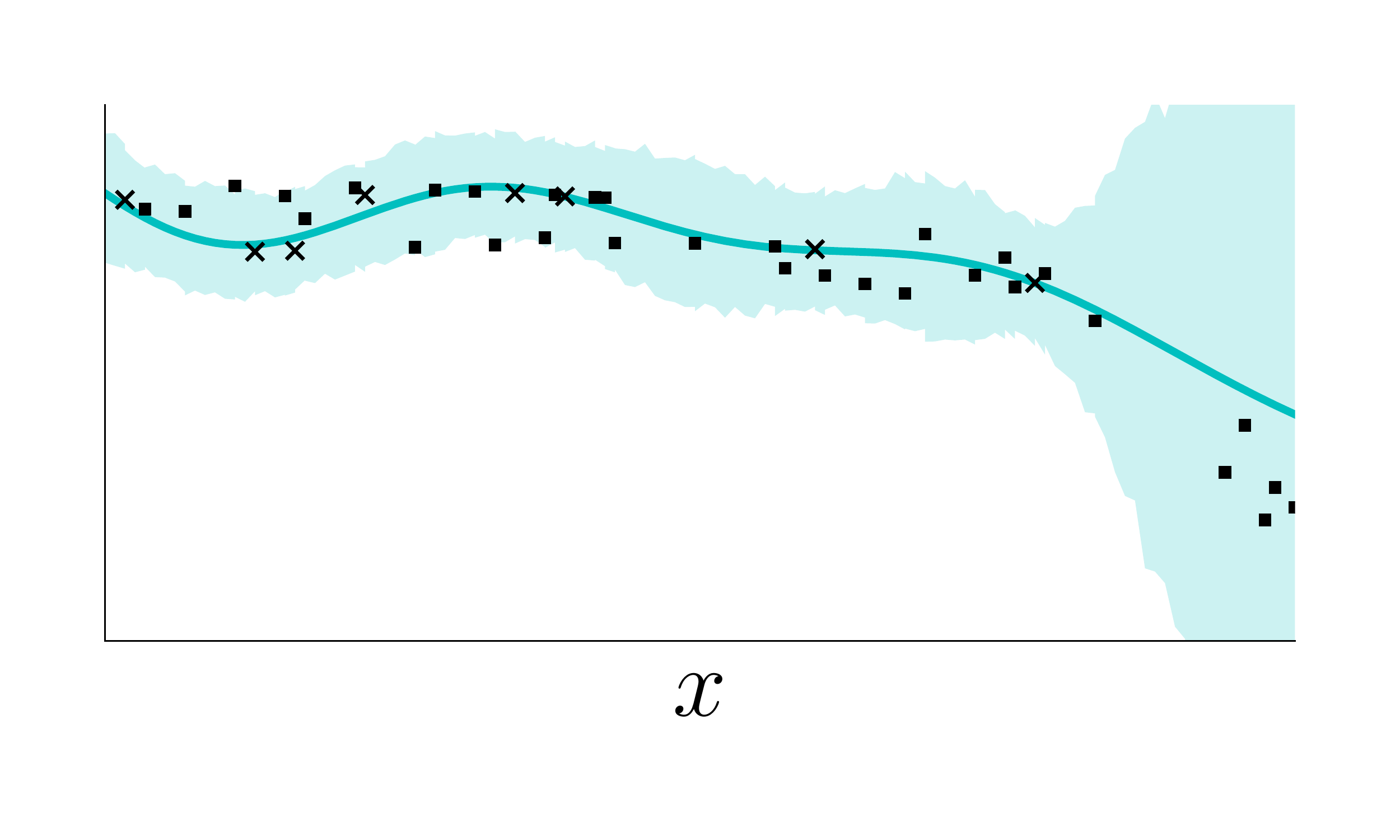}
\caption{\citet{vovk2020conformal}.} 
\label{fig:toy_problem_vovk}
\end{subfigure}
\\
\begin{subfigure}[b]{0.32\textwidth}
\includegraphics[width = 1.1\columnwidth, trim={1.9cm 2cm 0 0},clip]{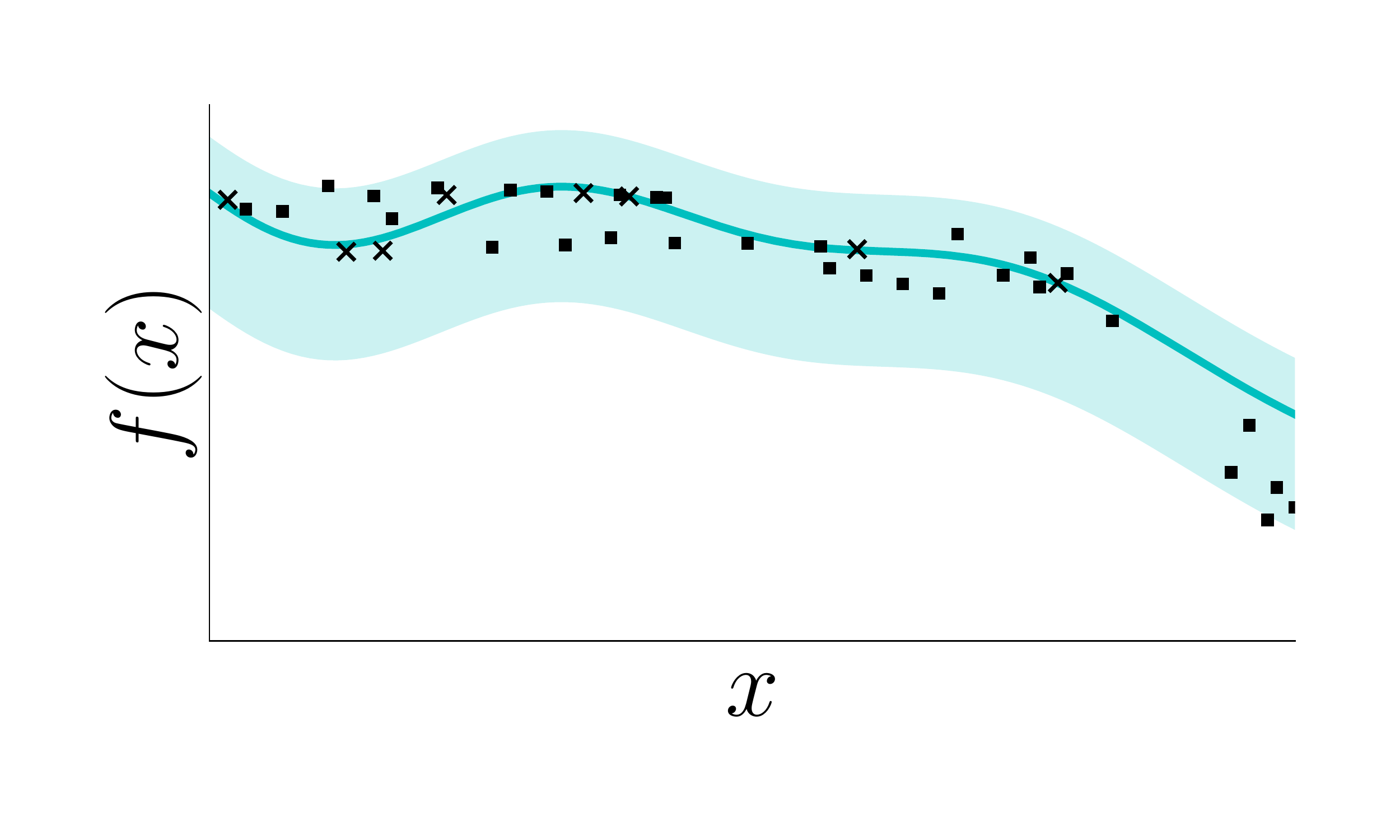}
\caption{Mean $\pm$ constant \citep{marx2022modular}.} 
\label{fig:bostonsharpness}
\end{subfigure}
\hfill
\begin{subfigure}[b]{0.32\textwidth}
\includegraphics[width = 1.1\columnwidth, trim={1.9cm 2cm 0 0},clip]{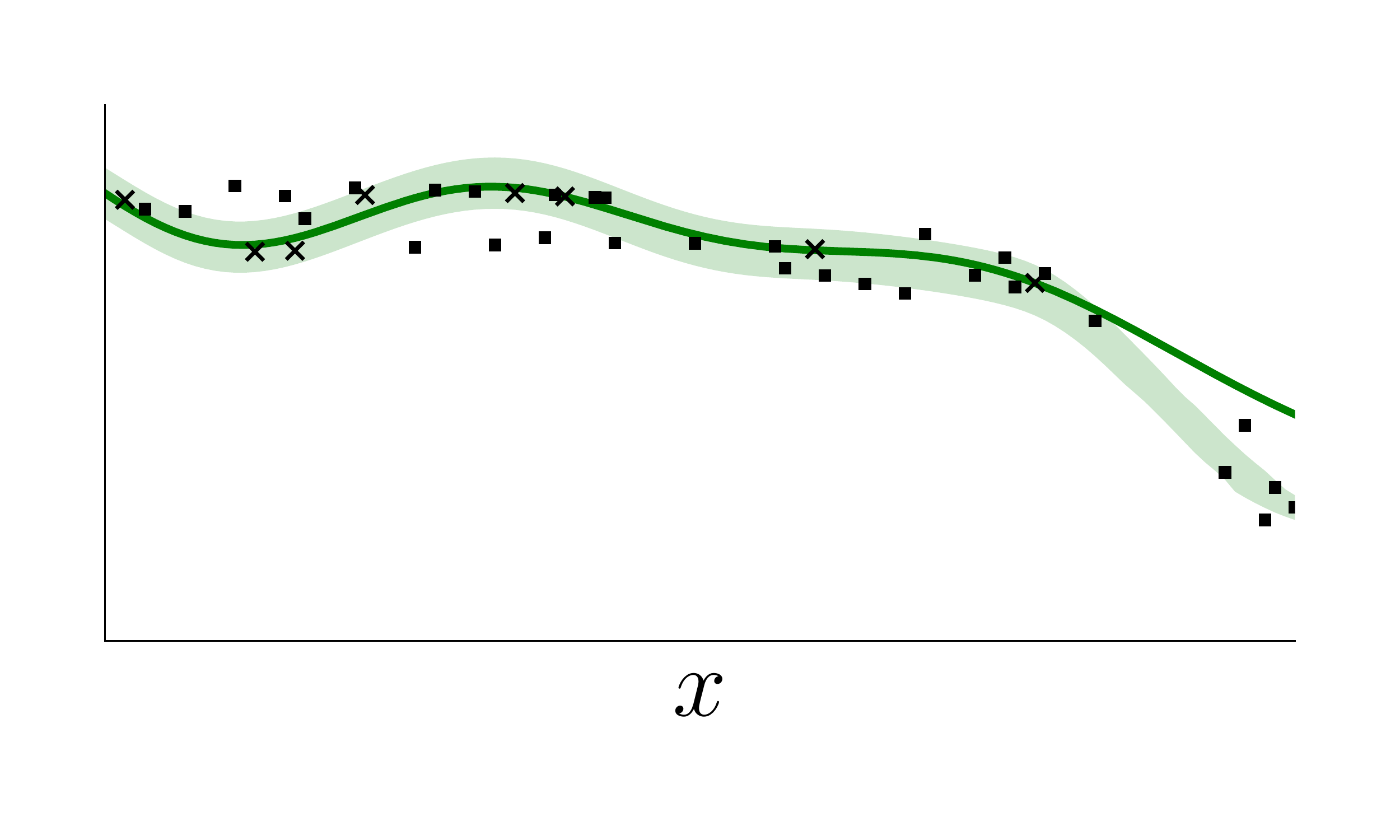}
\caption{Neural network-based recalibrator \citep{kuleshov2022calibrated}.} 
\label{fig:toy_problem_kuleshovNN}
\end{subfigure}
\hfill
\begin{subfigure}[b]{0.32\textwidth}
\includegraphics[width = 1.1\columnwidth, trim={1.9cm 2cm 0 0},clip]{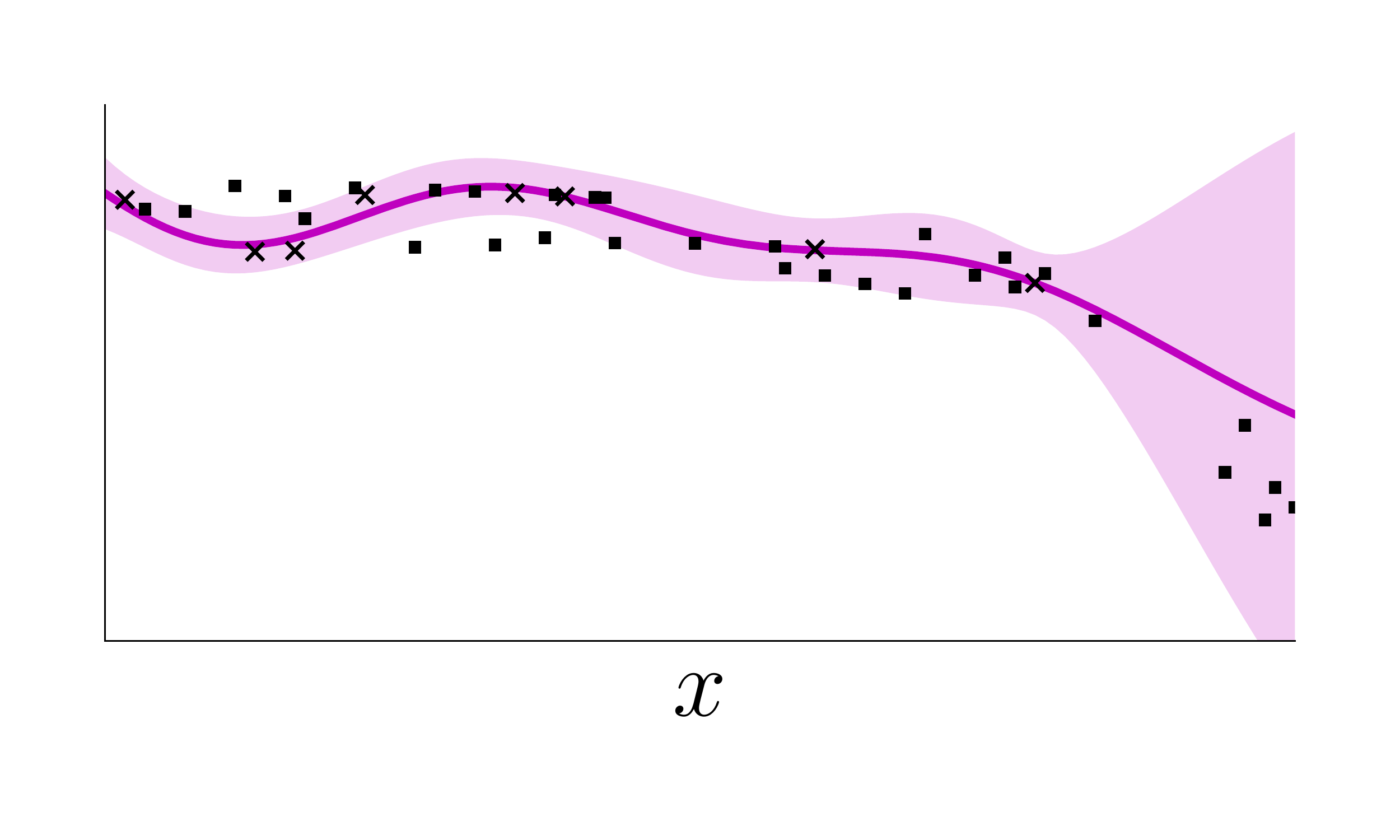}
\caption{Base model (vanilla GP).  \\\hspace{\textwidth}} 
\label{fig:toy_problem_base}
\end{subfigure}
\caption{Centered 99 $\%$ confidence intervals (shaded regions) obtained with our method, vanilla GPs recalibrated using the approaches of \citet{kuleshov2018accurate} and \citet{vovk2020conformal}, a naive vanilla GP, the point-predictor (variance-free) approach proposed in \citet{marx2022modular}, and a naive fully Bayesian GP. Solid lines represent the predictive mean, crosses represent data used to train the base model, and squares represent calibration (holdout) data. Our approach yields a model that is both well-calibrated and sharp. This is because of the added flexibility that comes from being able to also change the lengthscale to design a calibrated model.}
\label{fig:toyproblem}
\end{figure*}

\section{Experiments}
\label{section:experiments}
In this section, we apply and analyze our approach using a toy data set and different regression benchmark data sets from the UCI repository. In the supplementary material, we also compare our approach to that of \cite{capone2022gaussian} when used to obtain uniform error bounds and apply our method to two different Bayesian optimization problems.

The goal is for our approach to obtain a sharp calibrated regression model for each data set in the calibrated regression experiments. We test our approach on various data sets and compare it to the state-of-the-art recalibration approaches by \citet{kuleshov2018accurate} and \citet{vovk2020conformal}, the point predictor (posterior variance-free) approach proposed in \citet{marx2022modular}, as well as the check-score-based approach of \cite{kuleshov2022calibrated}. % Bayesian approach for the vanilla and fully Bayesian GPs. 
The technique proposed by \citet{kuleshov2018accurate} essentially multiplies the vanilla posterior standard deviation $\sigma_{\Dtr}(\bm{\theta}^R,\cdot)$ with the recalibrated z-score, such that the confidence level observed on the calibration data matches that of the desired confidence level. \citet{vovk2020conformal} employ a similar approach, except that random interpolation is employed to compute new scaling values. The point predictor-based method proposed in \citet{marx2022modular} discards the posterior standard deviation $\sigma_{\Dtr}(\bm{\theta}^R,\cdot)$ and computes a constant scalar that is added to the predictive mean and used to compute quantiles everywhere within the input space. The method of \cite{kuleshov2022calibrated} trains a neural network using a quantile loss, which takes base quantiles as inputs and returns new, recalibrated quantiles. As a recalibrator for the method of \cite{kuleshov2022calibrated}, we employ the same neural network architecture suggested in their paper, trained over $200$ epochs, with additional pretraining over $2000$ epochs using a single dataset for the UCI experiments. %When employing the naive vanilla and fully Bayesian GPs, the credible sets are assumed to correspond exactly to confidence intervals. 
In all experiments except kin8nm and Facebook comment volume 2, we employ standard GPs with automatic relevance determination squared-exponential (ARD-SE) kernels and zero prior mean as base models, trained using log-likelihood maximization. For the kin8nm and Facebook comment volume 2 datasets, we employ sparse GPs \citep{titsias2009variational} with zero prior mean, ARD-SE kernels, and $300$ inducing points.

\subsection{Toy Data Set}
\label{subsubsection:toydataset}

The first regression data set corresponds to a one-dimensional synthetic data set, where the results can be easily displayed visually. The main purpose of this section is to give an intuition as to how our approach computes confidence intervals compared to other techniques. We investigate the performance of our approach and compare it to other methods when employed to compute centered $99\%$ confidence intervals. We observe that the confidence intervals obtained with our approach peak less strongly far away from the data while being tight near the data compared to all other approaches except the one of \cite{kuleshov2022calibrated}. This is because we allow the lengthscale to change to obtain a calibrated model. In contrast, all other methods except that of \cite{kuleshov2022calibrated} scale the standard GP posterior variance without changing hyperparameters. The method of \cite{kuleshov2022calibrated}, which uses a neural network as a recalibrator, offers additional flexibility, resulting in sharper confidence intervals. However, calibration is not explicitly enforced during training, resulting in poor calibration. The results are depicted in \Cref{fig:toyproblem}.

\subsection{Benchmark Data Sets}
\label{subsection:ucidatasets}

We now experiment with seven different regression data sets from the UCI repository, two containing over eight thousand data points and requiring sparse GP approximations. The training/calibration/test split is $0.6$, $0.2$, and $0.2$ for all data sets except the Facebook comment volume 2 data set, which contains over $80000$ data points, and where the split is $0.08$, $0.02$, and $0.9$. For the approach of \cite{kuleshov2022calibrated}, we follow the steps in their paper and limit the calibration data size to $500$.

We assess performance by employing diagnostic tools commonly used to assess calibration and sharpness \citep{kuleshov2018accurate,marx2022modular,gneiting2007probabilistic}). The score used to quantify calibration is the calibration error \citep{kuleshov2018accurate}, given by
\begin{align}
\label{eq:calibrationmetric}
    \begin{split}
        \text{cal}\left(\mu_{\Dtr}(\bm{\theta}^{R},\cdot), \beta_{\cdot}, \sigma_{\Dtr}(\cdot,\cdot)\right) = \sum_{j=1}^{m} \left(p_j - \hat{p}_j\right)^2,
    \end{split}
\end{align}
where $p_j$ corresponds to the $j$-th desired confidence level, chosen, e.g., evenly spaced between $0$ and $1$, and $\hat{p}_j$ is the observed confidence level, i.e.,
\begin{align}
\label{eq:observedconfidence}
    \begin{split}
        \hat{p}_j = \frac{\left\vert \left\{ y_t^* \ \big\vert \  \Delta y_t^* \leq  \beta_{p_j}\sigma_{\Dtr}(\bm{\theta}_{p_j}\bm{x}_t^*), t=1,..., T \right\} \right\vert }{T}.
    \end{split}
\end{align}
Here the superscript $*$ denotes test inputs and outputs, $T$ denotes the total number of test points, and $\Delta y_t^* \coloneqq \mu_{\Dtr}(\bm{\theta}^{R},\bm{x}_t^*) - y_t^*$. We employ $m=21$ evenly spaced values between $0$ and $1$ for ${p}_j$.
To measure sharpness, we employ the average length of the $95\%$ confidence interval, the average standard deviation of the predictive distribution, and the average negative log-likelihood of the predictions \citep{gneiting2007probabilistic,marx2022modular}. Note that since every model outputs a quantile for any desired calibration level, the corresponding negative log-likelihood and average standard deviation are well specified. These are computed by employing the cumulative distribution function, obtained by inverting the quantile function specified by each model.

\begin{table*}[t]
\caption{Expected calibration error and sharpness of different methods over $100$ repetitions per experiment. We report the expected calibration error (ECE), the average predictive standard deviation (STD), negative log-likelihood (NLL) and 95$\%$ confidence interval width (95$\%$ CI) obtained with our approach, vanilla GPs recalibrated using the methods of \citet{kuleshov2018accurate} (RK) and \citet{vovk2020conformal} (RV), the variance-free approach proposed in \citet{marx2022modular} (RM), and the neural network-based recalibrator of \cite{kuleshov2022calibrated} (NN). We additionally report performance for the base model (B), which corresponds to a vanilla GP without the holdout data. Lower is better for all metrics. In all experiments except the Facebook2 dataset, our method is sharpest compared to all other methods except that of \cite{kuleshov2022calibrated}. However, \cite{kuleshov2022calibrated} performs more poorly in terms of expected calibration error.}
\label{table:resultsuci}
\vskip 0.15in
\begin{center}
\begin{small}
\begin{sc}
\begin{tabular}{lcccccccr}
\toprule
        Data set & Metric & {Ours} & RK & RV & RM & NN & B \\ 
        \midrule
        & ECE & \bostonpricesece \\
        & STD & \bostonpricesstd \\
        Boston & NLL & \bostonpricesnll \\
        & 95$\%$ CI & \bostonpricesinterval \\
        \midrule
        & ECE & \yachtece \\
        & STD & \yachtstd \\
        yacht & NLL & \yachtnll \\
        & 95$\%$ CI & \yachtinterval \\
        \midrule
        & ECE & \autompgece \\
        & STD & \autompgstd \\
        mpg & NLL & \autompgnll \\
        & 95$\%$ CI & \autompginterval \\
        \midrule
        & ECE & \wineece \\
        & STD & \winestd \\
        wine & NLL & \winenll \\
        & 95$\%$ CI & \wineinterval \\
        \midrule
        & ECE & \cementece \\
        & STD & \cementstd \\
        concrete & NLL & \cementnll \\
        & 95$\%$ CI & \cementinterval \\
        \midrule
        & ECE & \kineightnmece \\
        & STD & \kineightnmstd \\
        kin8nm & NLL & \kineightnmnll \\
        & 95$\%$ CI & \kineightnminterval \\
        \midrule
        & ECE
        & \facebookece \\
        & STD & \facebookstd \\
        Facebook2 & NLL & \facebooknll \\
        & 95$\%$ CI & \facebookinterval \\
% Vanilla     & 0.73± 0.036  &  0.85± 0.064   &0.5± 0.043 & 1.5± 0.033 & 1.2± 0.18 &-0.51± 0.021 & 0.51\\
% Full B.     & - &  -1.2± 0.098
%  &0.64± 0.16& - & - &0.12 \\
\bottomrule
\end{tabular}
\end{sc}
\end{small}
\end{center}
\vskip -0.1in
\end{table*}

% \subsection{Toy Example}
% \label{subsection:syntheticdata}
We carried out each experiment $100$ times and report the resulting average expected calibration error, standard deviation, negative log-likelihood, and length of the centered 95$\%$ confidence intervals in \Cref{table:resultsuci}. Our approach performs best or marginally worse than all other calibration approaches regarding expected calibration error. This is to be expected from \Cref{theorem:maintheorem}. Furthermore, it outperforms all approaches except that of \cite{kuleshov2022calibrated} in sharpness. However, the improved sharpness of the method of \cite{kuleshov2022calibrated} comes at the expense of calibration.

% \begin{figure*}
% \centering
% \includegraphics[scale=0.3]{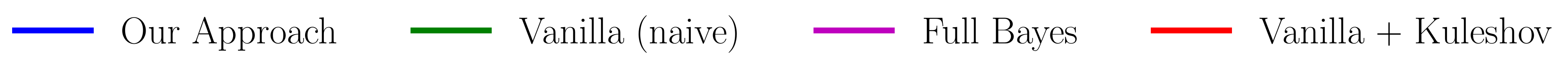}
% \begin{subfigure}[b]{0.32\textwidth}
% \includegraphics[width = 1.1\columnwidth]{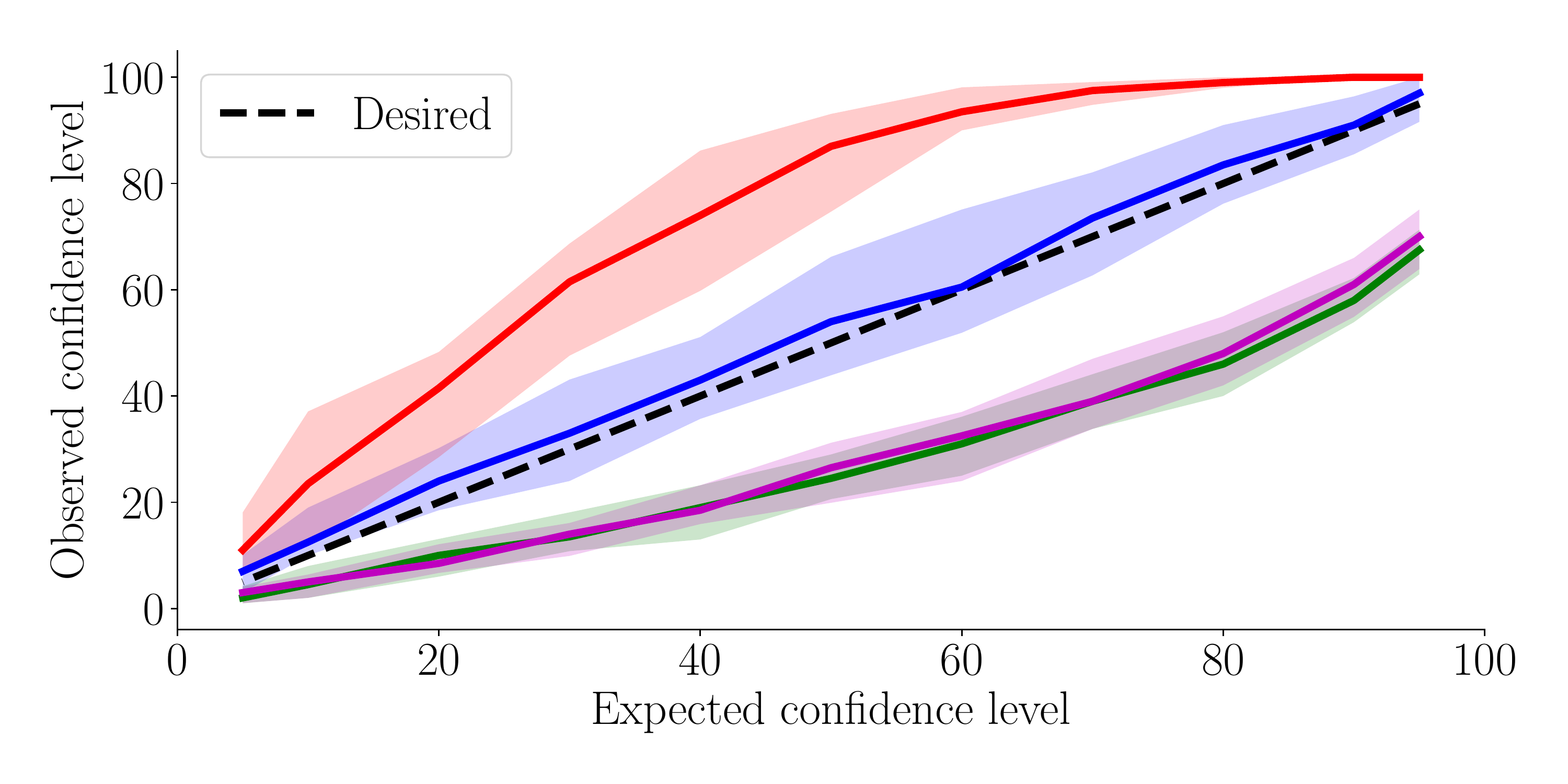}
% \caption{Confidence intervals corresponding to an expected confidence level of $1\%$ obtained with our approach, that of \citet{kuleshov2018accurate}, a vanilla GP, and a fully Bayesian GP. The scaling factor for the two latter approaches is computed using the inverse error function.} 
% \label{fig:bostoncalibration}
% \end{subfigure}
% \hfill
% \begin{subfigure}[b]{0.32\textwidth}
% \includegraphics[width = 1.1\columnwidth]{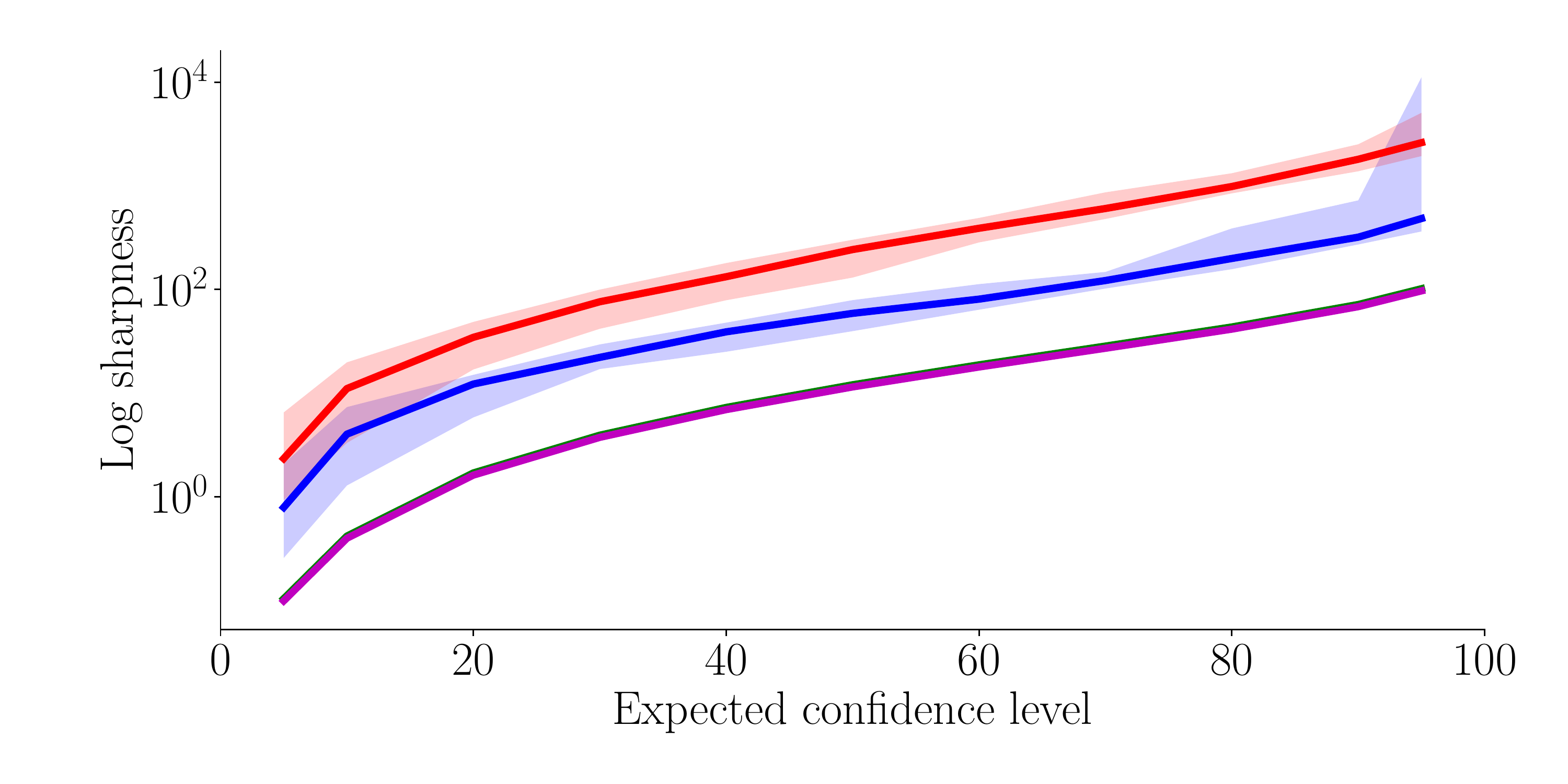}
% \caption{Logarithm of sharpness over expected rate of constraint satisfaction. Sharpness corresponds to the sum over the scaled GP posterior variances used to estimate the confidence intervals for each desired confidence level.
% ~\\
% } 
% \label{fig:bostonsharpness}
% \end{subfigure}
% \caption{Detailed results for Boston house price data set. The calibration and sharpness for our method are shown in blue, that of a naive vanilla GP in green, that of a naive fully Bayesian GP in magenta, and that of a vanilla GP rescaled using the approach by \citet{kuleshov2018accurate} in red. Our approach yields the most accurately calibrated model for all confidence levels. Sharpness is only second to the naive approaches at low expected confidence levels. Note, however, that the latter perform very poorly with respect to calibration.}
% \label{fig:bostonhouseprices}
% \end{figure*}

\section{Conclusion}
\label{section:conclusion}

We have presented a calibration method for Gaussian process regression that leverages the monotonicity properties of the kernel hyperparameters to obtain sharp calibrated models. We show that, under reasonable assumptions, our method yields an accurately calibrated model as the size of data used for calibration increases. When applied to different regression benchmark data sets, our approach was shown to be competitive in sharpness compared to state-of-the-art recalibration methods without sacrificing calibration performance. It is worth stressing that, though the tools presented here emerge naturally from a Gaussian process setting, we do not require our predictor to be a Gaussian process to obtain theoretical guarantees. In future work, we aim to leverage similar monotonicity characteristics to get sharply calibrated models using tools different from Gaussian processes. Furthermore, we aim to experiment with inducing variables as hyperparameters when optimizing the models for sharpness.

\section*{Acknowledgements}

This work was supported in part by the European Research
Council Consolidator Grant Safe data-driven control for
human-centric systems (CO-MAN) under grant agreement
number 864686.

\bibliographystyle{icml2022-custom}
\bibliography{AllPhDReferences}

\newpage
{\centering \Large \textbf{Sharp Calibrated Gaussian Processes - Supplementary Material}}

\section*{Proof of \Cref{theorem:maintheorem}}
\label{appendix:proofoftheorem}
For completeness, we state Theorem 1 from \citet{marx2022modular} here in adapted form, which we then use to prove \Cref{theorem:maintheorem}.
\begin{lemma}[\citet{marx2022modular}, Theorem 1]
\label{lemma:marx}
Let $\varphi: \X \times \mathbb{R} \rightarrow \mathbb{R}$ be a function such that that $\varphi(\bm{x}, y)$, $\bm{x}, y\sim \Pi$ is an absolutely continuous random variable and, for any fixed $\bm{x}^* \in \X$, $\varphi(\bm{x}^*, \cdot)$ is strictly monotonically increasing. Furthermore, for a set of calibration data $\Dcal = \left\{\bm{x}_{\text{cal}}^i, y_{\text{cal}}^i\right\} $ with $N_{\text{cal}} = \vert\Dcal \vert$ and a permutation $i_1, \ldots, i_{N_{\text{cal}}}\in [1,2,\ldots,N_{\text{cal}}]$ such that
\[
\varphi(\bm{x}_{\text{cal}}^{i_j}, y_{\text{cal}}^{i_j}) < \varphi(\bm{x}_{\text{cal}}^{i_{j+1}}, y_{\text{cal}}^{i_{j+1}}),
\] 
let $H: \mathbb{R} \rightarrow [0,1]$ be a monotonically non-decreasing function, such that $H(\varphi(\bm{x}_{\text{cal}}^{i_j}, y_{\text{cal}}^{i_j})) = \frac{j}{N_{\text{cal}}+1}$ holds for all $j=1,\ldots, N_{\text{cal}}$. Then
\[
\mathbb{P}_{\bm{x},y \sim \Pi}\Big(H(\varphi(\bm{x}, y)) \leq \delta  \Big) \in \left[\delta- \frac{1}{N_{\text{cal}}+1}, \delta + \frac{1}{N_{\text{cal}}+1}\right]  \quad \forall \ \delta \in [0, 1].
\]
\end{lemma}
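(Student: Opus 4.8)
The plan is to recognize this as the standard split-conformal coverage guarantee and to prove it by an exchangeability (uniform-rank) argument, the only nonstandard ingredient being a sandwiching step that accommodates a merely non-decreasing $H$. Throughout I would treat the $N_{\text{cal}}$ calibration pairs and the fresh test pair $(\bm{x},y)$ as $N_{\text{cal}}+1$ i.i.d.\ draws from $\Pi$, so that the stated probability is read over their joint law. Pushing them through $\varphi$ produces $N_{\text{cal}}+1$ i.i.d.\ real-valued scores; since $\varphi(\bm{x},y)$ is absolutely continuous by hypothesis, these scores are almost surely pairwise distinct, so ties can be ignored.

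First I would name the test score $S_0 \coloneqq \varphi(\bm{x},y)$ and the calibration scores $S_i \coloneqq \varphi(\bm{x}_{\text{cal}}^i, y_{\text{cal}}^i)$, write their order statistics as $S_{(1)} < \cdots < S_{(N_{\text{cal}})}$ (so that $S_{(j)} = \varphi(\bm{x}_{\text{cal}}^{i_j}, y_{\text{cal}}^{i_j})$), and adopt the conventions $S_{(0)} = -\infty$ and $S_{(N_{\text{cal}}+1)} = +\infty$. I would then define the rank $R \coloneqq \lvert\{\, i : S_i \leq S_0 \,\}\rvert \in \{0,1,\ldots,N_{\text{cal}}\}$. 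The key probabilistic fact is that, by exchangeability of the $N_{\text{cal}}+1$ i.i.d.\ scores, $R$ is uniformly distributed, i.e.\ $\mathbb{P}(R = j) = 1/(N_{\text{cal}}+1)$ for every $j$; equivalently, the test score is equally likely to land in any of the $N_{\text{cal}}+1$ gaps cut out by the sorted calibration scores.

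Next I would translate the event $\{H(S_0) \leq \delta\}$ into a condition on $R$. On $\{R = j\}$ we have $S_{(j)} < S_0 < S_{(j+1)}$, and since $H$ is non-decreasing and pinned by $H(S_{(j)}) = j/(N_{\text{cal}}+1)$, this sandwiches $H(S_0) \in [\,j/(N_{\text{cal}}+1),\,(j+1)/(N_{\text{cal}}+1)\,]$. Setting $m \coloneqq \lfloor (N_{\text{cal}}+1)\delta \rfloor$, the event $\{H(S_0)\leq\delta\}$ then holds surely when $j < m$, fails surely when $j > m$, and is undetermined only when $j = m$. Summing over $j$ and using the uniform law of $R$ gives $\mathbb{P}(H(S_0) \leq \delta) = m/(N_{\text{cal}}+1) + c/(N_{\text{cal}}+1)$ with $c \in [0,1]$, hence a value in $[\,m/(N_{\text{cal}}+1),\,(m+1)/(N_{\text{cal}}+1)\,]$. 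Since $m/(N_{\text{cal}}+1) \leq \delta < (m+1)/(N_{\text{cal}}+1)$, this interval is contained in $[\,\delta - 1/(N_{\text{cal}}+1),\,\delta + 1/(N_{\text{cal}}+1)\,]$, which is exactly the claim.

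The main obstacle is conceptual rather than computational: because $H$ is only non-decreasing (not strictly monotone or invertible), one cannot turn $\{H(S_0)\leq\delta\}$ directly into an inequality on $S_0$, so the entire argument hinges on routing through the discrete rank $R$, which forces all the ambiguity into the single bin $\{R=m\}$ and is precisely what produces the $\pm 1/(N_{\text{cal}}+1)$ slack. The two supporting points that need care are justifying the uniform law of $R$ (exchangeability of the i.i.d.\ scores, with ties ruled out almost surely by absolute continuity of $\varphi(\bm{x},y)$) and correctly treating the boundary bins $j=0$ and $j=N_{\text{cal}}$ through the $\pm\infty$ conventions and the range $H \in [0,1]$.
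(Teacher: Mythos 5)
Your proof is correct, but there is nothing in the paper to compare it against: the paper states this lemma ``for completeness'' as an adapted form of Theorem~1 of \citet{marx2022modular} and uses it as a black box in the proof of \Cref{theorem:maintheorem}, deferring its proof entirely to that reference. What you have written is a self-contained reconstruction of the standard split-conformal coverage argument that underlies the cited result, and it is sound: the i.i.d.\ scores are almost surely distinct by the absolute-continuity hypothesis, so the rank $R$ of the test score among the $N_{\text{cal}}+1$ scores is exactly uniform by exchangeability; the pinning $H(S_{(j)}) = j/(N_{\text{cal}}+1)$ together with monotonicity sandwiches $H(S_0)$ bin by bin (with the boundary bins handled by $H$ having range $[0,1]$); and the bookkeeping with $m = \lfloor (N_{\text{cal}}+1)\delta \rfloor$ isolates all ambiguity in the single bin $\{R = m\}$, which is precisely the $\pm 1/(N_{\text{cal}}+1)$ slack in the conclusion. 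Two remarks. First, you correctly flag the one genuinely delicate point, which the lemma's wording leaves implicit: the probability must be read over the \emph{joint} law of $\Dcal$ and the test pair, with the calibration data drawn i.i.d.\ (or at least exchangeably) from $\Pi$ — conditioned on a fixed realization of $\Dcal$ no such bound holds, and this tacit hypothesis is what licenses the uniform-rank step. Second, you never use the hypothesis that $\varphi(\bm{x}^*,\cdot)$ is strictly increasing in $y$; that is not a gap, since this assumption plays its role downstream in the paper (where $\varphi$ is defined implicitly and the coverage statement is converted into quantile coverage for $y$), not in the lemma itself.
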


The idea behind the proof of \Cref{theorem:maintheorem} is to show that the solution $\varphi(\bm{x}, y)$ of the implicit equation
\begin{align}
\label{eq:implicitequation}
   y - \mu_{\Dtr}(\bm{\theta}^R,\bm{x}) - \hat{\beta}(\varphi(\bm{x}, y)) \sigma\left(\hat{\bm{\theta}}\left(\varphi(\bm{x},y)\right), \bm{x} \right) = 0
\end{align}
satisfies the requirements stipulated by
\Cref{lemma:marx}, where $\tilde{\beta}(\delta)$ and $\tilde{\bm{\theta}}(\delta)$ are \textit{arbitrary} continuous functions such that 
\begin{align}
\label{eq:conditions1}
\begin{split}
    &\lim_{\delta\rightarrow \infty}\tilde{\beta}(\delta) = \infty, \qquad \lim_{\delta\rightarrow -\infty}\tilde{\beta}(\delta) = -\infty,\\
    & \lim_{\delta\rightarrow \infty}\tilde{\bm{\theta}}(\delta) = \infty, \qquad \lim_{\delta\rightarrow -\infty}\tilde{\bm{\theta}}(\delta) = \infty,
    \end{split}
\end{align}
\begin{align}
\label{eq:conditions2}
\begin{split}
& \tilde{\beta}(\delta) \ \text{is strictly monotonically increasing for all $\delta \in \mathbb{R}$} \\
&\tilde{\bm{\theta}}(\delta) \ \text{is monotonically increasing for all} \ \delta \in \{\delta \in \mathbb{R} \ \vert  \ \tilde{\beta}(\delta) >0 \}\\
    & \tilde{\bm{\theta}}(\delta) \ \text{is monotonically decreasing for all} \ \delta \in \{\delta \in \mathbb{R} \ \vert  \ \tilde{\beta}(\delta) <0 \}.
    \end{split}
\end{align} $\tilde{\bm{\theta}}(\delta)$ is monotonically increasing with respect to $\delta$ for all $\delta \in \{\delta \in \mathbb{R} \ \vert  \ \tilde{\beta}(\delta) >0 \}$, and monotonically decreasing with respect to $\delta$ for all $\delta \in \{\delta \in \mathbb{R} \ \vert  \ \tilde{\beta}(\delta) <0 \}$. Note that the functions $\hat{\beta}(\delta)$ and $\hat{\bm{\theta}}(\delta)$ can be easily extended within the real axis to satisfy the requirements mentioned above, which means that they are contained within the set from which $\tilde{\beta}(\delta)$ and $\tilde{\bm{\theta}}(\delta)$. The reason why we choose arbitrary $\tilde{\beta}(\delta)$ and $\tilde{\bm{\theta}}(\delta)$, as opposed to the functions $\hat{\beta}(\delta)$ and $\hat{\bm{\theta}}(\delta)$, is because we need $\varphi(\bm{x}, y)$ to be independent of the calibration data $\Dcal$ in order to be able to employ \Cref{lemma:marx}. Showing that $\varphi(\bm{x}, y)$ satisfies the requirements of \Cref{lemma:marx} for any $\tilde{\beta}(\delta)$ and $\tilde{\bm{\theta}}(\delta)$ then implies that we can also choose any function within this class that minimizes sharpness, meaning that these properties also extend to $\hat{\beta}(\delta)$ and $\hat{\bm{\theta}}(\delta)$.

To prove \Cref{theorem:maintheorem}, we will require the following result.

\begin{lemma}
\label{lemma:monotonicequation}
Consider the regressor $\mu_{\Dtr}(\bm{\theta}^R,\cdot)$, and let $\tilde{\beta}(\delta)$ and $\tilde{\bm{\theta}}(\delta)$ be functions that satisfy \eqref{eq:conditions1} and \eqref{eq:conditions2}. Then, for arbitrary fixed $y$ and $\bm{x}$,
\begin{align}
\label{eq:implicitequation}
   y - \mu_{\Dtr}(\bm{\theta}^R,\bm{x}) - \tilde{\beta}(\delta) \sigma\left(\tilde{\bm{\theta}}\left(\delta\right), \bm{x} \right)
\end{align}
is strictly monotonically decreasing with $\delta$.
\end{lemma}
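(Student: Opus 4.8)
The plan is to prove monotonicity by examining the two $\delta$-dependent terms in \eqref{eq:implicitequation} separately and showing that each contributes non-positively to the derivative, with at least one contributing strictly negatively. Writing $g(\delta) \coloneqq y - \mu_{\Dtr}(\bm{\theta}^R,\bm{x}) - \tilde{\beta}(\delta)\,\sigma(\tilde{\bm{\theta}}(\delta),\bm{x})$, the first term $y - \mu_{\Dtr}(\bm{\theta}^R,\bm{x})$ is constant in $\delta$, so everything reduces to showing that the product $p(\delta) \coloneqq \tilde{\beta}(\delta)\,\sigma(\tilde{\bm{\theta}}(\delta),\bm{x})$ is strictly monotonically \emph{increasing} in $\delta$. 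Here I would use $\sigma \geq 0$ throughout (it is a posterior standard deviation), that $\tilde{\beta}$ is strictly increasing by \eqref{eq:conditions2}, and that $\sigma(\tilde{\bm{\theta}}(\delta),\bm{x})$ is monotone in $\delta$ via \Cref{assumption:montonicity} composed with the monotonicity of $\tilde{\bm{\theta}}(\delta)$ stipulated in \eqref{eq:conditions2}.

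The natural approach is a case split on the sign of $\tilde{\beta}(\delta)$, since the required direction of monotonicity of $\tilde{\bm{\theta}}(\delta)$ flips with that sign. First I would treat the region $\{\delta : \tilde{\beta}(\delta) > 0\}$: there $\tilde{\bm{\theta}}(\delta)$ is increasing, so by \Cref{assumption:montonicity} the factor $\sigma(\tilde{\bm{\theta}}(\delta),\bm{x}) \geq 0$ is non-decreasing, and multiplying a strictly increasing positive factor $\tilde{\beta}$ by a non-decreasing non-negative factor yields a strictly increasing product. Symmetrically, on $\{\delta : \tilde{\beta}(\delta) < 0\}$ the hypothesis makes $\tilde{\bm{\theta}}(\delta)$ decreasing, hence $\sigma(\tilde{\bm{\theta}}(\delta),\bm{x})$ is non-increasing, so $|\tilde{\beta}(\delta)|\,\sigma$ shrinks as $\delta$ grows while $\tilde{\beta}$ itself climbs toward zero; thus $\tilde{\beta}(\delta)\,\sigma$ again strictly increases. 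Because $\tilde{\beta}$ is strictly increasing and spans $(-\infty,\infty)$ by \eqref{eq:conditions1}, there is a single crossing point $\delta_0$ where $\tilde{\beta}(\delta_0)=0$; at and near that point $p(\delta)$ passes through zero from below, and I would verify continuity of $p$ there so the two monotone pieces glue into a globally strictly increasing function.

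The main obstacle I anticipate is the gluing at the sign change $\delta_0$ and the degenerate possibility $\sigma(\tilde{\bm{\theta}}(\delta),\bm{x}) = 0$ over an interval, which would stall strictness. To handle the crossing cleanly I would avoid differentiation (since $\sigma$ composed with a merely continuous $\tilde{\bm{\theta}}$ need not be differentiable) and instead argue directly from the definition: take any $\delta_1 < \delta_2$ and show $p(\delta_1) < p(\delta_2)$ by comparing both to the value $p(\delta_0) = 0$. For $\delta_1 < \delta_0 < \delta_2$ we get $p(\delta_1) < 0 < p(\delta_2)$ immediately because $\tilde{\beta}$ is strictly negative then strictly positive while $\sigma \geq 0$; for $\delta_1,\delta_2$ on the same side the within-region argument above applies. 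The subtle point is strictness when $\sigma=0$: since $\sigma$ is a posterior standard deviation it is strictly positive for any kernel of interest (the noise term $\sigma_0^2$ alone guarantees $\sigma^2 \geq \sigma_0^2 > 0$), so I would invoke that to rule out $\sigma = 0$ and secure strict inequality throughout. Finally, since $g(\delta) = \text{const} - p(\delta)$, strict monotonic increase of $p$ gives strict monotonic decrease of $g$, completing the claim.
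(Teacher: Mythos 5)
Your proof is correct and takes essentially the same route as the paper, whose own proof is a one-line assertion that the claim ``follows directly from \Cref{assumption:montonicity} and the properties \eqref{eq:conditions1} and \eqref{eq:conditions2}''; your case split on the sign of $\tilde{\beta}(\delta)$, composing the monotonicity of $\tilde{\bm{\theta}}(\delta)$ with \Cref{assumption:montonicity}, is exactly the intended elaboration. You also correctly identify and close the one detail the paper leaves implicit: strict (rather than non-strict) monotonicity needs $\sigma\bigl(\tilde{\bm{\theta}}(\delta),\bm{x}\bigr) > 0$, which holds here because the posterior variance includes the additive noise term $\sigma_0^2$.
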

\textit{Proof.} The proof follows directly from \Cref{assumption:montonicity} and the properties \eqref{eq:conditions1} and \eqref{eq:conditions2}. \qed

\textit{Proof of \Cref{theorem:maintheorem}.}
Let $\tilde{\beta}(\delta)$ and $\tilde{\bm{\theta}}(\delta)$ be functions that satisfy \eqref{eq:conditions1} and \eqref{eq:conditions2}. Due to \Cref{lemma:monotonicequation}, we can define the function $\varphi: \X \times \mathbb{R}\rightarrow [0,1]$ as the unique solution to the implicit equation
\begin{align}
\label{eq:implicitequation}
   y - \mu_{\Dtr}(\bm{\theta}^R,\bm{x}) - \tilde{\beta}(\varphi(\bm{x}, y)) \sigma\left(\tilde{\bm{\theta}}\left(\varphi(\bm{x},y)\right), \bm{x} \right) = 0.
\end{align}
Note that, since $y - \mu_{\Dtr}(\bm{\theta}^R,\bm{x})$ is strictly monotonically increasing with $y$, $\varphi(\bm{x},y)$ is a strictly monotonically increasing function of $y$ for any fixed $\bm{x}$. Furthermore, since $y$ is absolutely continuous,
\[y_{\text{cal}}^{i} - \mu_{\Dtr}(\bm{\theta}^R,\bm{x}_{\text{cal}}^{i}) \neq y_{\text{cal}}^{j} - \mu_{\Dtr}(\bm{\theta}^R,\bm{x}_{\text{cal}}^{j})\]
holds for all $i\neq j$ almost surely, which implies $\varphi(\bm{x}_{\text{cal}}^{i}, y_{\text{cal}}^{i}) \neq \varphi(\bm{x}_{\text{cal}}^{j}, y_{\text{cal}}^{j})$ for all $i\neq j$ almost surely. Hence, $\varphi(\bm{x},y)$, $\bm{x}, y\sim \Pi$, corresponds to an absolutely continuous random variable. Hence,  given any monotonically non-decreasing function $H(\cdot)$ that satisfies the requirement \[H\left(\varphi(\bm{x}_{\text{cal}}^{i_j}, y_{\text{cal}}^{i_j})\right) = \frac{j}{N_{\text{cal}}+1},\]
\Cref{lemma:marx} implies that
\begin{align}
\label{eq:resultforarbitraryphi}
\mathbb{P}_{\bm{x},y \sim \Pi}\Big(H\left(\varphi(\bm{x}, y)\right) \leq \delta  \Big) \in  \left[\delta- \frac{1}{N_{\text{cal}}+1}, \delta + \frac{1}{N_{\text{cal}}+1}\right]  \quad \forall \ \delta \in [0, 1].
\end{align}
Since $\tilde{\beta}(\delta)$ and $\hat{\bm{\theta}}\left(\delta\right)$ are arbitrary, and $\hat{\beta}(\delta)$ and $\tilde{\bm{\theta}}\left(\delta\right)$ are continuous and also satisfy \eqref{eq:conditions1} and \eqref{eq:conditions2} within $\delta \in [0,1]$, we can substitute $\varphi(\cdot,\cdot)$ in \eqref{eq:resultforarbitraryphi} with $\hat{\varphi}(\cdot,\cdot)$, which is the unique solution of the implicit equation
\begin{align}
\label{eq:impliciteqforourphi}
   y - \mu_{\Dtr}(\bm{\theta}^R,\bm{x}) - \hat{\beta}(\hat{\varphi}(\bm{x}, y)) \sigma\left(\hat{\bm{\theta}}\left(\hat{\varphi}(\bm{x},y)\right), \bm{x} \right) = 0.
\end{align}
Now, in the particular case of $\hat{\varphi}(\cdot,\cdot)$, due to \eqref{eq:surrogateminimization}, we have that
\[\hat{\varphi}(\bm{x}_{\text{cal}}^{i_j},y_{\text{cal}}^{i_j}) = \frac{j}{N_{\text{cal}}+1 },\]
meaning that $H(\hat{\varphi}(\bm{x}_{\text{cal}}^{i_j},y_{\text{cal}}^{i_j})) = \hat{\varphi}(\bm{x}_{\text{cal}}^{i_j},y_{\text{cal}}^{i_j})$, i.e., \eqref{eq:resultforarbitraryphi} holds for $\varphi(\cdot,\cdot) = \hat{\varphi}(\cdot,\cdot)$ and the identity function $H(\delta) = \delta$. Furthermore, since $\hat{\varphi}(\cdot,\cdot)$ is uniquely defined by the implicit equation \eqref{eq:impliciteqforourphi} and $\hat{\beta}(\delta) \sigma_{\Dtr}(\hat{\bm{\theta}}(\delta), \bm{x} )$ is monotonically increasing with $\delta$, this in turn implies
\begin{align*}
&\mathbb{P}_{\bm{x},y \sim \Pi}\Big(\hat{\varphi}(\bm{x}, y) \leq \delta \Big) = \mathbb{P}_{\bm{x},y \sim \Pi}\Bigg(\tilde{\beta}(\hat{\varphi}(\bm{x}, y)) \sigma\left(\tilde{\bm{\theta}}\left(\hat{\varphi}(\bm{x},y)\right), \bm{x} \right) \leq \tilde{\beta}(\delta) \sigma\left(\tilde{\bm{\theta}}\left(\delta\right), \bm{x} \right) \Bigg) \\
=& \mathbb{P}_{\bm{x},y \sim \Pi}\Bigg(y - \mu_{\Dtr}(\bm{\theta}^R,\bm{x}) \leq \tilde{\beta}(\delta) \sigma\left(\tilde{\bm{\theta}}\left(\delta\right), \bm{x} \right) \Bigg).
\end{align*}
Since $\tilde{\beta}(\delta)$ and $\hat{\bm{\theta}}\left(\delta\right)$ are arbitrary, and $\hat{\beta}(\delta)$ and $\tilde{\bm{\theta}}\left(\delta\right)$ which, together with \eqref{eq:resultforarbitraryphi}, implies the desired result. \qed

\newpage

\section*{Comparison with \cite{capone2022gaussian}}

In this section, we briefly examine how our approach compares to that of \cite{capone2022gaussian} when used to compute uniform error bounds, i.e., $100$ percent credible intervals, for three different data sets. We carried out each experiment $10$ times. In the following, we report the rate of uniform error bound violation and the average length of the $100$ percent credible intervals. The method of \cite{capone2022gaussian} is purely Bayesian and thus heavily dependent on the prior. The resulting credible intervals are well-calibrated, i.e., they cover most of the data. However, our approach is much better regarding sharpness. This is because \cite{capone2022gaussian} is Bayesian and requires symmetric intervals, whereas our approach is frequentist and allows for asymmetric credible intervals. Our approach also exhibits a lower rate of uniform error bound violations than \cite{capone2022gaussian} in most cases, which suggests that a frequentist approach is more adequate for computing uniform error bounds than a Bayesian one.

\begin{table*}[th]
\caption{Rate of uniform error bound violation (RUEBV) and 100$\%$ confidence interval width obtained with our approach and that of \cite{capone2022gaussian}. Lower is better for all metrics.}
\label{table:standarddevuci}
\vskip 0.15in
\begin{center}
\begin{small}
\begin{sc}
\begin{tabular}{lcccccccr}
\toprule
        Data set & Metric & {Ours} & \cite{capone2022gaussian} \\ 
        \midrule
        & Rate of uniform error bound violation & {0.00376} & \textbf{0.000172} \\
        
        Boston & Length of 100$\%$ CI & \textbf{1.2} & 28.3 \\
        \midrule
        
        % yacht & NLL & \textbf{0.6 ± 0.21} & 0.8± 0.086  \\
        % & 95$\%$ CI & \textbf{1.8 ± 0.26 } & 2.3± 0.33  \\
        % \midrule
        & Rate of uniform error bound violation & \textbf{0.004}  & 0.0065 \\
        mpg 
        & Length of 100$\%$ CI & \textbf{1.7}  &  24.13  \\
        \midrule

        & Rate of uniform error bound violation &  \textbf{0.00072} & {0.00096} \\
        wine & Length of 100$\%$ CI & \textbf{4.7} & 24.8  \\
        % \midrule
        % concrete & NLL & \textbf{0.91 ± 0.043} & 1± 0.046  \\
        % & 95$\%$ CI & \textbf{2.5 ± 0.04} & 2.9± 0.044 \\
        
        % \midrule
        % & ECE & \textbf{0.00018 ± 0.000067} & 0.0022± 0.001 \\
        % & STD & 0.14 ± 0.0029 & \textbf{0.1± 0.0015} \\
        % kin8nm & NLL & -0.56 ± 0.013 & \textbf{-1± 0.012} \\
        % & 95$\%$ CI & 0.53 ± 0.019 & \textbf{0.34± 0.0078}  \\

% Vanilla     & 0.73± 0.036  &  0.85± 0.064   &0.5± 0.043 & 1.5± 0.033 & 1.2± 0.18 &-0.51± 0.021 & 0.51\\
% Full B.     & - &  -1.2± 0.098
%  &0.64± 0.16& - & - &0.12 \\
\bottomrule
\end{tabular}
\end{sc}
\end{small}
\end{center}
\vskip -0.1in
\end{table*}

\newpage

\newpage

\subsection*{Bayesian Optimization}
\label{subsection:bayesianoptimization}
We now investigate how the proposed calibration approach can be employed in a Bayesian optimization context using two commonly used benchmark functions, the Ackley and Rosenbrock functions.

In Bayesian optimization, the goal is to find a point in input space that maximizes an unknown function $f(\cdot)$. In particular, we investigate how our calibrated GP bound performs when used as an upper confidence bound (UCB) for a GP-UCB type acquisition function. Simply put, given a data set ${\mathcal{D}_t}$ of size $t$, the GP-UCB algorithm chooses a query point by maximizing the acquisition function 
\begin{align}
    \bm{x}^*_{t+1} = \arg\max_{\bm{x}} \mu_{\mathcal{D}_t}(\bm{\theta}^{R},\bm{x}) + \beta_{\mathcal{D}_t}\sigma_{\mathcal{D}_t}(\bm{\theta}^{R},\bm{x}),
\end{align}
where $\beta_{\mathcal{D}_t}$ is a tuning parameter that stipulates the trade-off between exploration and exploitation, and may or may not depend on the data set $\mathcal{D}_t$. It has been shown that if the unknown function $f(\cdot)$ belongs to the RKHS associated with the kernel $k(\bm{\theta}^{R},\cdot,\cdot)$, and $\beta_{\mathcal{D}_t}$ is chosen sufficiently large, then the GP-UCB achieves sublinear regret \citep{chowdhury2017kernelized}. However, both assumptions typically cannot be verified in practice, and choosing both the kernel $k(\bm{\theta}^{R},\cdot,\cdot)$ and the scaling factor $\beta_{\mathcal{D}_t}$ in a principled manner remains an open problem. We propose employing the modified acquisition function 
\begin{align}
    \bm{x}^* = \arg\max_{\bm{x}} \mu_{\mathcal{D}_t}(\bm{\theta}^{R},\bm{x}) + \beta_{\delta}\sigma_{\mathcal{D}_t}(\bm{\theta}_{\delta},\bm{x}),
\end{align}
where the hyperparameters $\bm{\theta}_{\delta}$ are obtained via a calibrated model and a suitable choice of confidence parameter $\delta$. In the experiments, we set $\beta_{\mathcal{D}_t}=1$ and compute the calibrated hyperparameters by setting $\delta=0.01$, meaning that we set expect only one percent of the evaluations to lie outside the confidence region. Note that even though the underlying function is fixed, it is reasonable to expect that some of the data lies outside the confidence region due to noise, and we can only expect the data to lie fully within the confidence region in the noiseless case, which we do not consider in this paper. Furthermore, we refrain from retraining the hyperparameters after each data point is collected, following the convention of other Bayesian optimization approaches \citep{Srinivas2012,chowdhury2017kernelized}. While this does not enable us to employ the theoretical guarantees developed in \Cref{section:proposedapproach}, it reduces computational time significantly. We additionally compare our results to the vanilla UCB algorithm, where the hyperparameters, chosen via log-likelihood maximization, are identical for both the posterior mean and variance, and we set $\beta_{\mathcal{D}_t}=2$.

We evaluate the results both in terms of cumulative regret and simple regret. Cumulative regret after $T$ steps corresponds to the metric
\begin{align}
    R_T^{\text{cumul}} = \sum_{t=1}^T  \left(\max_{\bm{x}\in\mathcal{X}} f(\bm{x}) - f(\bm{x}_t) \right),
\end{align}
whereas simple regret is given by
\begin{align}
    R_T^{\text{simple}} = \max_{\bm{x}\in\mathcal{X}} f(\bm{x}) - \max_{t\leq T} f(\bm{x}_t) .
\end{align}
Typically, a Bayesian optimization algorithm is deemed useful if cumulative regret exhibits sublinear growth, implying that the average regret goes to zero. Simple regret, by contrast, corresponds to the best query among all past queries and is an important metric whenever evaluation costs are low \citep{berkenkamp2019no}.

In the case of the Ackley experiment, our approach typically chose lengthscales that were smaller than those computed via likelihood maximization. This results in more exhaustive exploration than vanilla UCB, which in turn means that local minima are explored more carefully before the focus of the optimization is shifted elsewhere. This results in better performance than when using vanilla UCB, both in terms of cumulative and simple regret. The results correspond to the top two figures in \Cref{fig:regret}.

In contrast to the Ackley experiment, in the Rosenbrock experiment our approach selects lengthscales that are larger than those suggested by the likelihood maximum. Roughly speaking, this means that the confidence intervals produced by the likelihood maximum hyperparameters are too conservative, and our approach attempts to compensate for this by indicating more confidence in the posterior mean obtained with the vanilla GP. This means that local minima are explored less meticulously than with the vanilla UCB algorithm. This choice is justified by the cumulative regret obtained with our approach, as it is slightly smaller than that obtained by the vanilla UCB algorithm. However, this also results in worse simple regret than the vanilla UCB algorithm, which is intuitive, as our approach opts to explore local minima less accurately than the vanilla UCB algorithm. We also note that both algorithms converge towards the same simple regret as the number of iterations increases. The results correspond to the bottom two figures in \Cref{fig:regret}.

\begin{figure*}[t]
\centering
\begin{subfigure}[b]{0.495\textwidth}
\includegraphics[width = 0.995\columnwidth]{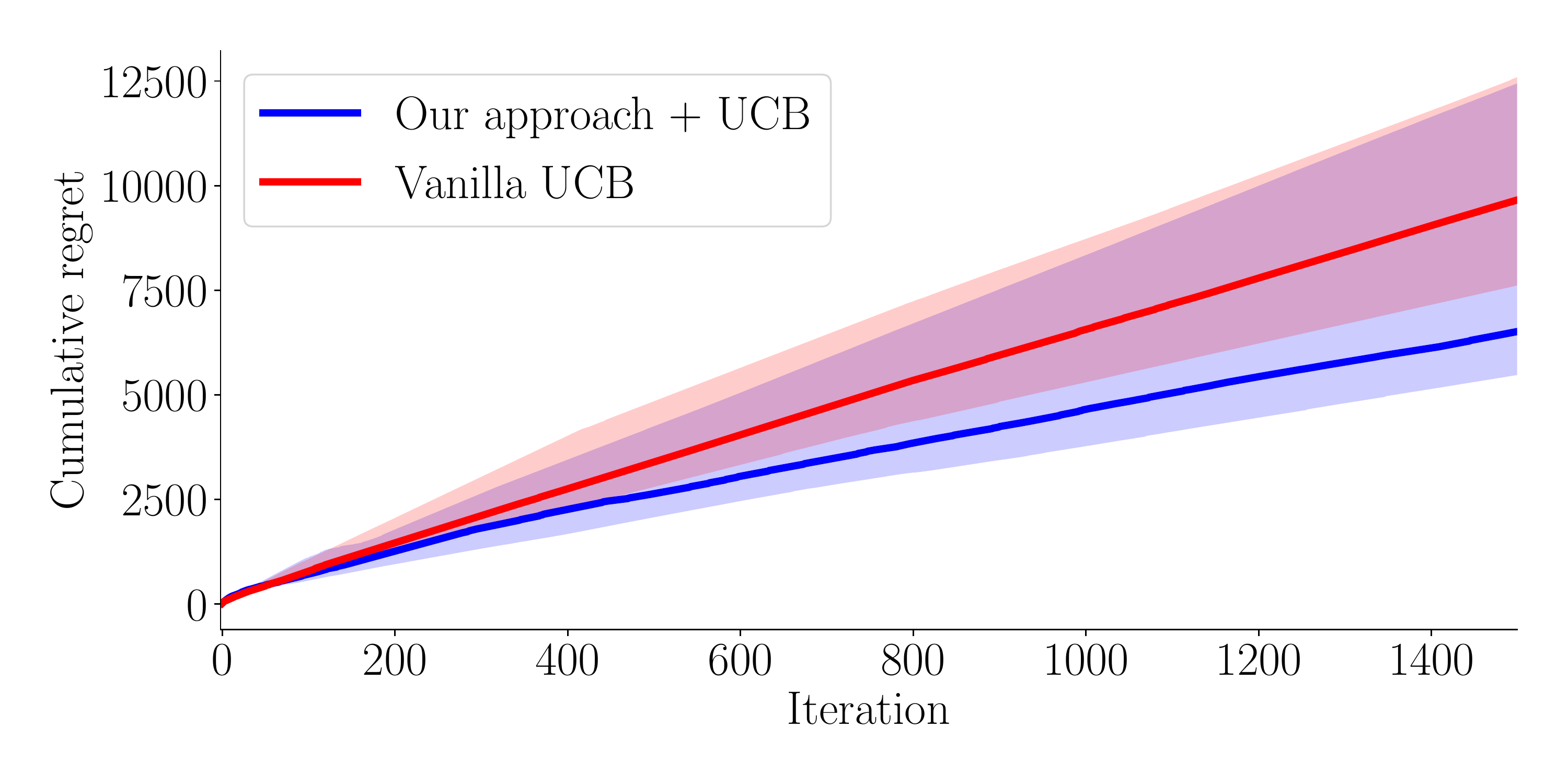}
\caption{Cumulative regret $R_T^{\text{cumul}}$ of Ackley experiment.} 
\label{fig:ackleycumulative}
\end{subfigure}
\hfill
\begin{subfigure}[b]{0.495\textwidth}
\includegraphics[width = 0.995\columnwidth]{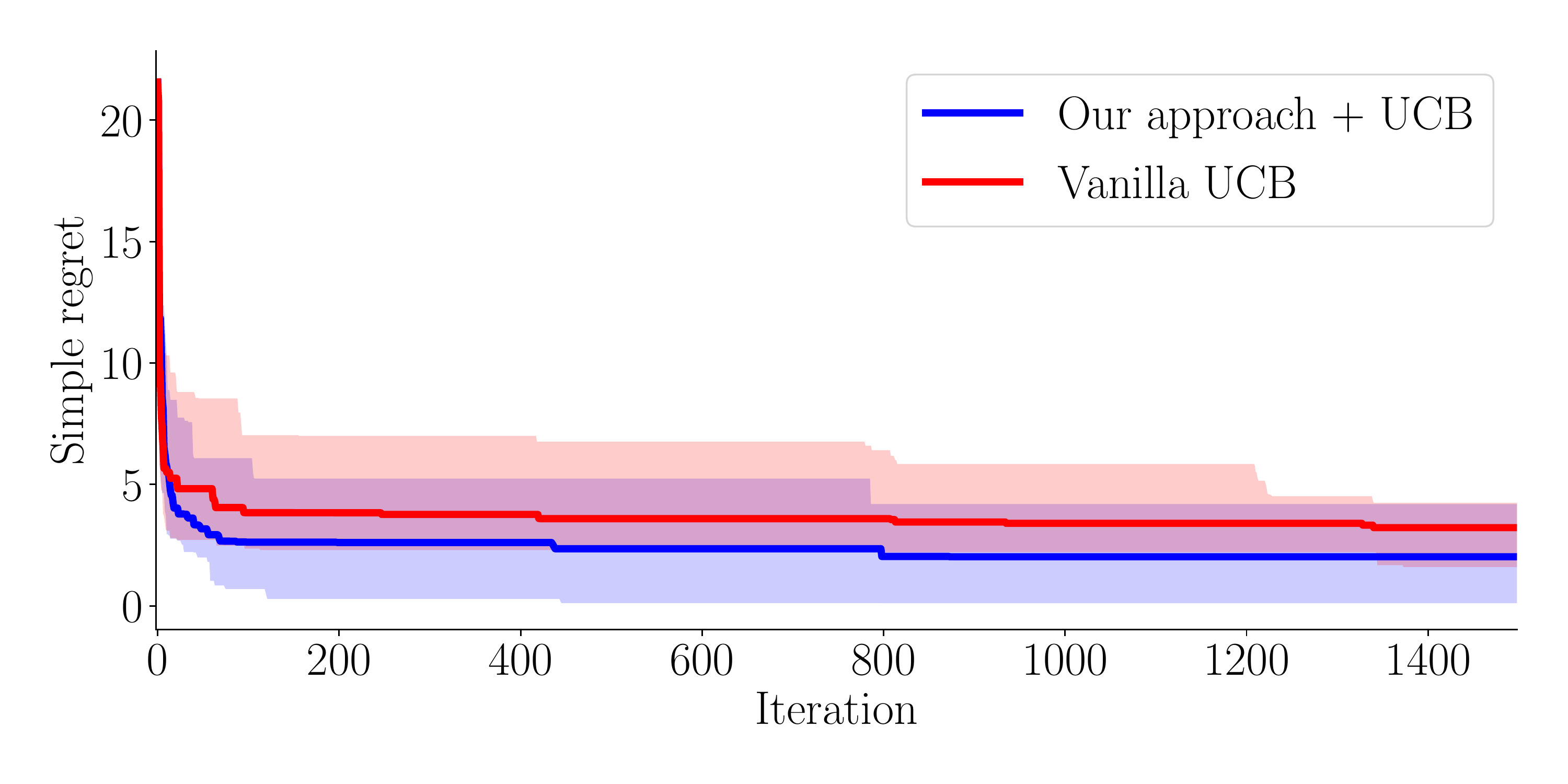}
\caption{Simple regret $R_T^{\text{simple}}$ of Ackley experiment.} 
\label{fig:ackleysimple}
\end{subfigure}
\begin{subfigure}[b]{0.495\textwidth}
\includegraphics[width = 0.995\columnwidth]{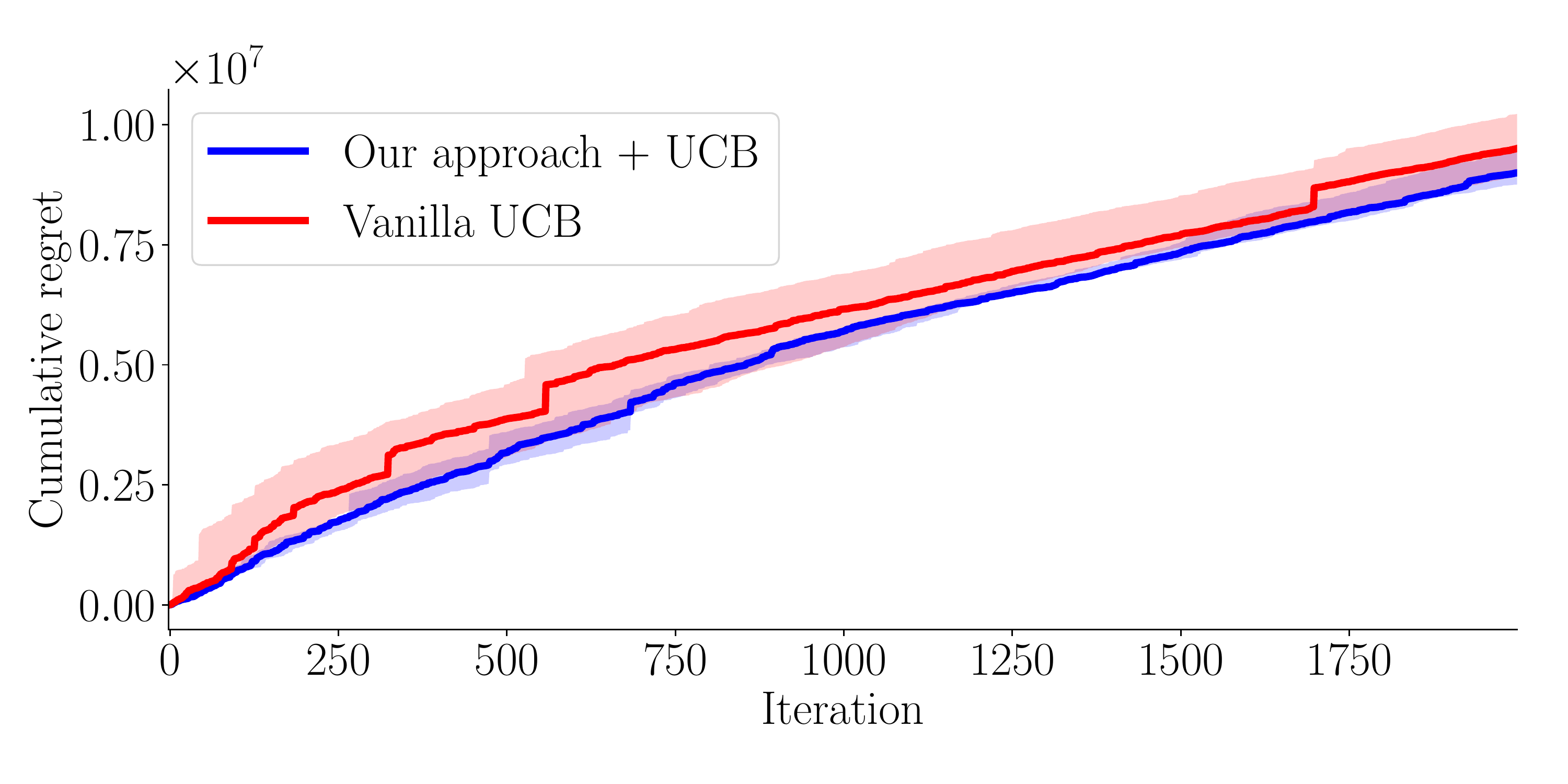}
\caption{Cumulative regret $R_T^{\text{cumul}}$ of Rosenbrock experiment.} 
\label{fig:rosenbrockcumulative}
\end{subfigure}
\hfill
\begin{subfigure}[b]{0.495\textwidth}
\includegraphics[width = 0.995\columnwidth]{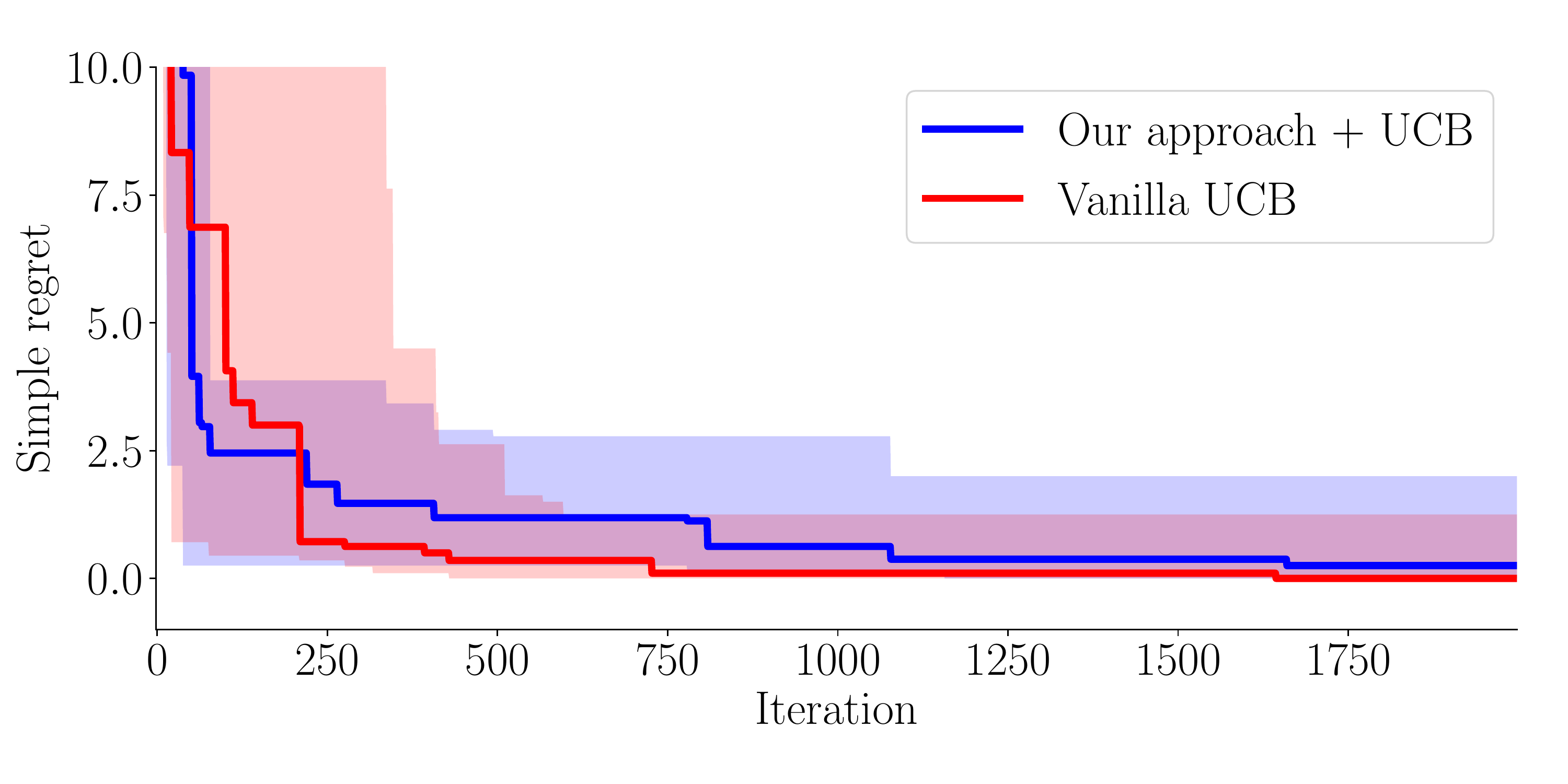}
\caption{Simple regret $R_T^{\text{simple}}$ of Rosenbrock experiment.} 
\label{fig:rosenbrocksimple}
\end{subfigure}
\caption{Regret of Ackley (top) and Rosenbrock (bottom) experiment over the number of Bayesian optimization iterations with UCB.}
\label{fig:regret}
\end{figure*}

%%%%%%%%%%%%%%%%%%%%%%%%%%%%%%%%%%%%%%%%%%%%%%%%%%%%%%%%%%%%

\end{document}